\newtheorem{RQ}{RQ}
\newtheorem{definition}{Definition}
\newtheorem{assumption}{Assumption}
\newtheorem{theorem}{Theorem}
\newtheorem{lemma}{Lemma}
\newtheorem{corollary}{Corollary} 
\title{FedDAA: Dynamic Client Clustering for Concept Drift Adaptation in Federated Learning}
\newcommand{\sustechaffil}{
  Department of Computer Science and Engineering\\
  Southern University of Science and Technology\\
  Shenzhen, China 518055
}
\author{%
  Fu Peng \\
  \sustechaffil \\
  \texttt{pengf2022@mail.sustech.edu.cn} \\
  \And
  Ming Tang \\
  \sustechaffil \\
  \texttt{tangm3@mail.sustech.edu.cn} \\
}
\begin{document}

\maketitle
\begin{abstract}
In federated learning (FL), the data distribution of each client may change over time, introducing both temporal and spatial data heterogeneity, known as concept drift. Data heterogeneity arises from three drift sources: real drift (shift in $P(y|\mathbf{x})$), virtual drift (shift in $P(\mathbf{x})$), and label drift (shift in $P(y)$). However, most existing FL methods addressing concept drift primarily focus on real drift. When clients experience virtual or label drift, these methods often fail to selectively retain useful historical knowledge, leading to catastrophic forgetting.
A key challenge lies in distinguishing different sources of drift, as they require distinct adaptation strategies: real drift calls for discarding outdated data, while virtual or label drift benefits from retaining historical data. Without explicitly identifying the drift sources, a general adaptation strategy is suboptimal and may harm generalization.
To address this challenge, we propose FedDAA, a dynamic clustered FL framework designed to adapt to multi-source concept drift while preserving valuable historical knowledge.
Specifically, FedDAA integrates three modules: a cluster number determination module to determine the optimal number of clusters; a real drift detection module to distinguish real drift and virtual/label drift; and a concept drift adaptation module to adapt to new data while retaining useful historical data.
We provide theoretical convergence guarantees, and experiments show that FedDAA achieves 7.84\%–8.52\% accuracy improvements over state-of-the-art methods on Fashion-MNIST, CIFAR-10, and CIFAR-100.

\end{abstract}    
\section{Introduction}
\label{sec:intro}

Federated learning (FL) is a decentralized machine learning approach that enables multiple clients to collaboratively train a shared model while keeping their data localized. Existing FL frameworks assume that client data distributions remain stable over time. However, this assumption often fails to hold in real-world scenarios. For example, user preferences for movie genres may evolve temporally; respiratory disease case counts commonly exhibit seasonal variations. As the data distributions of different clients could change at different time steps and even evolve in different directions (e.g., towards distinct distributions), this leads to \textit{data heterogeneity in both time and space} in FL. This phenomenon is referred to as concept drift in FL, shown in Fig.~\ref{fig:distributed concept drift}.

Data heterogeneity in either time or space can be triggered by three sources~\cite{DBLP:journals/tkde/LuLDGGZ19,DBLP:journals/ftml/KairouzMABBBBCC21,DBLP:journals/csur/GamaZBPB14}.
Specifically, suppose there are two datasets that follow joint probability distributions $P_1(\mathbf{x}, y)$ and $P_2(\mathbf{x}, y)$ respectively, where $\mathbf{x}$ is the feature vector and $y$ is the label. The two datasets may be collected from the same client at different time steps (temporal heterogeneity), or from different clients at the same time step (spatial heterogeneity).
Concept drift occurs if \( P_{1}(\mathbf{x},y) \neq P_{2}(\mathbf{x},y) \). The joint probability $ P(\mathbf{x},y)$ can be decomposed into two components, i.e., $P(\mathbf{x},y) = P(\mathbf{x})P(y|\mathbf{x})$ or $P(y)P(\mathbf{x}|y)$. Three sources \footnote{Readers may raise concerns about the absence of a fourth source: \( P_1(\mathbf{x}|y) \neq P_2(\mathbf{x}|y) \) while \( P_1(y) = P_2(y) \). The reasons are twofold: (i) existing works are primarily interested in predicting labels given features (i.e., $P(y|\mathbf{x})$)~\cite{DBLP:journals/tkde/LuLDGGZ19,DBLP:journals/ftml/KairouzMABBBBCC21,DBLP:conf/icml/GuoTL24}; (ii) for the fourth scenario, identical labels $y$ may correspond to different features $\mathbf{x}$ (e.g. due to seasonal variations or cultural heterogeneity)~\cite{DBLP:journals/ftml/KairouzMABBBBCC21}, which can be viewed as a form of virtual drift.} of data heterogeneity can be defined as:
(i) \textit{Real drift} arises when  $P_1(y|\mathbf{x}) \neq P_2(y|\mathbf{x})$  while $ P_1(\mathbf{x}) = P_2(\mathbf{x}) $; 
(ii) \textit{Virtual drift} occurs when \( P_1(\mathbf{x}) \neq P_2(\mathbf{x}) \) while \( P_1(y|\mathbf{x}) = P_2(y|\mathbf{x}) \);
(iii) \textit{Label drift} occurs when \( P_1(y) \neq P_2(y) \) while \( P_1(\mathbf{x}|y) = P_2(\mathbf{x}|y) \).
Among these three sources, real drift refers to changes in the conditional distribution \( P(y \mid \mathbf{x}) \), leading to a change in the decision boundary,\footnote{A decision boundary is the surface that separates different classes for a classification task in machine learning.} whereas virtual drift and label drift involve changes in the marginal distribution \( P(\mathbf{x}) \) or the label distribution \( P(y) \), without affecting the decision boundary. The change in the decision boundary significantly degrades model performance. Thus, we classify real drift as a separate category, while grouping virtual drift and label drift together.

\begin{wrapfigure}{r}{0.3\textwidth} 
    \centering
    \includegraphics[width=0.3\textwidth]{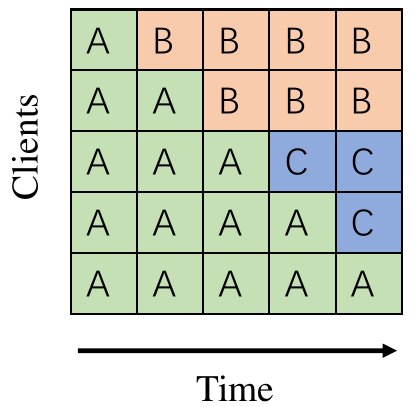} 
    \caption{FL under concept drift. There are two dimensional data heterogeneity: time and space. ``A'', ``B'' and ``C'' represent different data distributions.}
    \label{fig:distributed concept drift}
\end{wrapfigure}


%
To deal with data heterogeneity in space (i.e., across clients), existing works~\cite{DBLP:conf/icml/GuoTL24,DBLP:conf/nips/GhoshCYR20,DBLP:journals/tmc/WangXLXHZ23,DBLP:conf/ijcnn/BriggsFA20,DBLP:journals/tnn/SattlerMS21,DBLP:conf/aaai/RuanJ22,DBLP:conf/nips/MarfoqNBKV21,DBLP:journals/www/LongXSZWJ23,DBLP:journals/corr/abs-2201-07316,DBLP:journals/tnn/YanTW24} proposed clustered FL methods. Clustered FL is a framework that groups clients
into clusters based on their local data distributions to address
the data heterogeneity across clients. 
The idea of clustered FL stems from the fact that no single global model can perform well for all clients, especially when clients' data follow different conditional distributions $P(y|\mathbf{x})$ (i.e., decision boundaries)~\cite{DBLP:journals/tnn/SattlerMS21, DBLP:conf/aistats/JothimurugesanH23}.
Recently, \cite{DBLP:conf/icml/GuoTL24} proposes a principle of clustering: separating clients with different data conditional distributions into different clusters. 
However, existing clustered FL methods~\cite{DBLP:conf/icml/GuoTL24,DBLP:conf/nips/GhoshCYR20,DBLP:journals/tmc/WangXLXHZ23,DBLP:conf/ijcnn/BriggsFA20,DBLP:journals/tnn/SattlerMS21,DBLP:conf/aaai/RuanJ22,DBLP:conf/nips/MarfoqNBKV21,DBLP:journals/www/LongXSZWJ23,DBLP:journals/corr/abs-2201-07316,DBLP:journals/tnn/YanTW24} fail to consider temporal data heterogeneity and consequently do not adjust the number of clusters over time. 
When considering temporal heterogeneity, most existing FL concept drift methods~\cite{DBLP:conf/aistats/JothimurugesanH23,yang2024multi,DBLP:conf/nips/ChenX0LH24,zhou2024drift,DBLP:journals/mta/CasadoLCIRB22,DBLP:conf/bigdataconf/ChenCCR21,DBLP:conf/mobihoc/ZhangZXZ0CCY24,DBLP:conf/ijcnn/CanonacoBMR21,DBLP:conf/icml/PanchalCMMSMG23,saile2024client} primarily focus on real drift and often overlook the presence of other drift types such as virtual and label drift. As a result, these methods often mistakenly treat virtual and label drift as real drift, and discard historical data that could still be beneficial when virtual or label drift occurs. However, in such cases, the decision boundaries between historical and current data often remain similar, meaning that historical data can still provide useful knowledge and enhance model generalization. Consequently, these methods fail to simultaneously adapt to new data distributions and retain valuable historical knowledge, leading to catastrophic forgetting and making them suboptimal for handling complex and dynamic environments.

To address the above challenges, we aim to: (i) adaptively adjust the number of clusters, and (ii) distinguish real drift from virtual or label drift, enabling adaptation to new data distributions while retaining useful historical knowledge. Specifically, we formulate three research questions (RQs) as follows.




\begin{RQ}[Dynamic Number of Clusters]
         The number of clusters (the number of data conditional distributions $P(y|\mathbf{x})$) dynamically evolves over time in clustered FL. How to determine the optimal number of clusters at each time step?
\end{RQ}

\begin{RQ}[FL Concept Drift Detection and Adaptation] 
How can we detect real drift to distinguish it from virtual and label drift? Based on the distinction between real drift and virtual/label drift, how can we design two adaptation strategies: one for real drift and another for virtual/label drift?
\end{RQ}

\begin{RQ}[Convergence Analysis]
    What are the theoretical convergence guarantees in clustered FL under concept drift?
\end{RQ}

To resolve the aforementioned research questions, we propose FedDAA, a novel clustered FL approach that dynamically adapts to multi-source concept drift while preserving useful historical knowledge. Our approach consists of three main modules.
First, since the number of clusters evolves over time (\textbf{RQ~1}), we propose a module to dynamically determine the optimal number of clusters. 
To this end, we introduce the concept of a \textit{data prototype}, defined as a matrix that compactly represents the conditional distribution \(P(y|\mathbf{x})\) of a client's local dataset. 
Our module clusters these data prototypes to  determine both the optimal number of clusters and the corresponding prototype cluster centers at each time step.
Second, to distinguish real drift from virtual/label drift (\textbf{RQ~2}), we develop a real drift detection module. 
When a client’s historical and current prototypes are assigned to different clusters, determined by the shortest Euclidean distance to the prototype clustering center, real drift is detected. Otherwise, the client's conditional distribution $P(y|\mathbf{x})$ remains unchanged, indicating virtual/label drift or no drift.
Third, we design a concept drift adaptation module (\textbf{RQ~2}) that enables FL system to adapt to new distributions selectively preserving useful historical knowledge. This module ensures that clients experiencing real drift update models using only current data, while those without real drift incorporate both historical and current data to mitigate catastrophic forgetting.
Fourth, to address \textbf{RQ~3}, we conduct a convergence analysis of FedDAA, which provides theoretical guarantees on the stability and convergence of the training process under dynamic cluster restructuring and evolving client data distributions.

This paper presents the following contributions:
\begin{enumerate}
\item We investigate a realistic FL concept drift scenario involving  occurrence of multiple concept drift sources (real, virtual, and label drift) and formulate three key research questions.

\item We propose a novel clustered FL framework named FedDAA. To enable dynamic adjustment of the number of clusters over time, we introduce data prototypes, which capture clients' conditional data distributions. In addition, we design a real drift detection module and employ distinct adaptation strategies for real drift and virtual/label drift, allowing selective retention of useful historical knowledge.
We also provide a convergence analysis of FedDAA in the presence of a dynamically changing number of clusters.
\item Experimental results demonstrate that FedDAA achieves consistent improvements over state-of-the-art methods, with average accuracy gains of 8.52\%, 7.84\%, and 8.31\% on Fashion-MNIST, CIFAR-10, and CIFAR-100 respectively.
\end{enumerate}


\section{Related Works}
\label{sec:related works}

\textbf{FL under concept drift.}  
Early FL methods that address concept drift~\cite{zhou2024drift,DBLP:journals/mta/CasadoLCIRB22,DBLP:conf/bigdataconf/ChenCCR21,DBLP:conf/ijcnn/CanonacoBMR21,DBLP:conf/icml/PanchalCMMSMG23,saile2024client,DBLP:conf/mobihoc/ZhangZXZ0CCY24} typically assume that all clients experience real drift simultaneously and share a single data conditional distribution $P(y|\mathbf{x})$, thus requiring only one global model.
However, a more realistic scenario is that different clients may experience real drift at different times, resulting in the coexistence of multiple conditional  distributions $P(y|\mathbf{x})$ in FL at a given moment. In this case, the single-model
solution fails to handle data heterogeneity across clients.
To address this problem, recent FL concept drift methods~\cite{DBLP:conf/aistats/JothimurugesanH23,yang2024multi,DBLP:conf/nips/ChenX0LH24} try to create multiple models to address spatial data heterogeneity and simply retrain models to address temporal data heterogeneity.
While existing methods~\cite{DBLP:conf/aistats/JothimurugesanH23,yang2024multi,DBLP:conf/nips/ChenX0LH24} considering spatial and temporal data heterogeneity, two critical challenges remain unresolved.
First, they assume that FL exclusively experience real drift, while ignoring the potential occurrence of both virtual/label drift. Thus, their concept drift detection method can only detect changes in the data joint distribution $P(\mathbf{x},y)$, leading to misidentify virtual/label drift as real drift.
Second, when virtual/label drift occurs, they mistakenly discard historical data that could enhance model generalization, relying solely on the current data for concept drift adaptation. This leads to catastrophic forgetting.
In this work, we consider a more practical setting where virtual, real, and label drift may occur. Furthermore, we propose a novel framework to simultaneously adapt to new distributions and selectively retain useful historical knowledge.
\textbf{Clustered federated learning.} Clustered FL is a promising approach to address spatial data heterogeneity in FL. Foundational clustered FL methods~\cite{DBLP:conf/icml/GuoTL24,DBLP:conf/nips/GhoshCYR20,DBLP:journals/tmc/WangXLXHZ23,DBLP:conf/ijcnn/BriggsFA20,DBLP:journals/tnn/SattlerMS21,DBLP:conf/aaai/RuanJ22,DBLP:conf/nips/MarfoqNBKV21,DBLP:journals/www/LongXSZWJ23,DBLP:journals/corr/abs-2201-07316,DBLP:journals/tnn/YanTW24} aim to group clients with similar data distributions to improve model performance. 
The authors of \cite{DBLP:conf/nips/MarfoqNBKV21} propose a clustering-based approach by formulating and solving an optimization problem to handle federated multi-task learning. 
Building on this, the authors of \cite{DBLP:conf/aaai/RuanJ22} introduce FedSoft, a pioneering soft clustering method that assumes each client's data is drawn from multiple distributions.
However, recently the authors of~\cite{DBLP:conf/icml/GuoTL24} argue that only clients with different data conditional distributions are assigned to separate clusters to improve model generalization performance.  
They propose FedRC, a robust clustering framework that addresses spatial data heterogeneity in FL by dividing clients with different data conditional distributions into different groups, while explicitly considering three concept drift sources.
However, existing clustered FL methods~\cite{DBLP:conf/icml/GuoTL24,DBLP:conf/nips/GhoshCYR20,DBLP:journals/tmc/WangXLXHZ23,DBLP:conf/ijcnn/BriggsFA20,DBLP:journals/tnn/SattlerMS21,DBLP:conf/aaai/RuanJ22,DBLP:conf/nips/MarfoqNBKV21,DBLP:journals/www/LongXSZWJ23,DBLP:journals/corr/abs-2201-07316,DBLP:journals/tnn/YanTW24} do not consider temporal data heterogeneity.
In this work, we propose a novel framework based on clustered FL to address temporal data heterogeneity for FL under concept drift . 

\section{Problem Formulation}

\textbf{Concept drift definition and sources in FL.}

We consider an FL system with $K$ clients. The data distribution of clients changes over time.  The dataset at each client $k\in \mathcal{K}\triangleq\{1,2,...,K\}$ at each time $t \in \mathcal{T}\triangleq\{1,2,..., T\}$ is denoted by $S_{k}^{t}=\{Z_{k,0}^{t},Z_{k,1}^{t},...,Z_{k,N_{k}^{t}}^{t}\}$. $N_{k}^{t}$ is the number of data samples of client \(k\) at time step \(t\). The $i$-th data sample $Z_{k,i}^{t}$ is defined as $\{\mathbf{x}_{k,i}^{t},y_{k,i}^{t}\}$, where $\mathbf{x}_{k,i}^{t}$ represents the features and $y_{k,i}^{t}$ represents the labels. Suppose the dataset $S_{k}^{t}$ are sampled from a joint distribution $P_{k}^{t}(\mathbf{x},y)$.  Based on the concept drift survey~\cite{DBLP:journals/tkde/LuLDGGZ19}, we define temporal and spatial concept drift in FL. Concept drift can be mainly triggered by three sources in terms of time and space: real drift, virtual drift, label drift~\cite{DBLP:journals/tkde/LuLDGGZ19,DBLP:journals/ftml/KairouzMABBBBCC21}.
\begin{definition}[Concept Drift in FL]
    i) \textit{Spatial data heterogeneity}: For a given time \( t \) and two clients \( k_{1} \) and \( k_{2} \), concept drift occurs in FL if \( P_{k_{1}}^{t}(\mathbf{x},y) \neq P_{k_{2}}^{t}(\mathbf{x},y) \). 
    ii) \textit{Temporal data heterogeneity}: For a given time step \( t \) and a client \( k \), concept drift occurs in FL if \( P_{k}^{t-1}(\mathbf{x},y) \neq P_{k}^{t}(\mathbf{x},y) \).
\end{definition}

\textbf{Problem statement.}
In FL under concept drift, clients' data would follow different conditional distributions $P(y|\mathbf{x})$, resulting spatial data heterogeneity.  To address data heterogeneity across clients, we apply the clustered FL framework~\cite{DBLP:conf/icml/GuoTL24} to learn a set of global models by grouping clients with similar data conditional distributions $P(y|\mathbf{x})$. At each time step $t$, we learn a set of global models $\boldsymbol{w}_{c}^{t}$ for $c \in \mathcal{C}^t \triangleq\{1,2,..., C^t\}$, where $\mathcal{C}^t$ denotes the index set of clusters. 
Each global model $\boldsymbol{w}_c^{t}$ is corresponding to a cluster or a data conditional probability distribution $P(y|\mathbf{x})$.
Given a loss function $f(\boldsymbol{w}_c^{t};Z_{k}^{t})$, the population loss $F_k^{t}(\boldsymbol{w}_c^{t})$ of a client $k$ for the global model $\boldsymbol{w}_c^t$ at time $t$ is the expected loss for data following distribution $P_{k}^{t}(Z_k^t)$:  $F_k^{t}(\boldsymbol{w}_c^{t}) = \mathbb{E}_{Z_k^t\sim P_{k}^{t}}[f(\boldsymbol{w}_c^{t};Z_{k}^{t})]$.
Following previous works~\cite{DBLP:conf/icml/GuoTL24,DBLP:conf/nips/MarfoqNBKV21}, we suppose among the local dataset $S_k^t$, there are $N_{k,c}^{t}$ data points belong to cluster $c$. Here, we define $\alpha_{k,c}^{t} = N_{k,c}^t / N_k^t \in [0,1]$, which is the proportion of data from client $k$ that belongs to cluster $c$. We have $\sum_c \alpha_{k,c}^{t} = 1$. 
For each time step \(t\), our goal is to train \(C^{t}\) new global models that can both adapt to the current data and incorporate useful historical knowledge from clients. Specifically, at each time \(t\), we aim for the \(C_t\) global models to perform well on the current data at time step \(t\) as well as on the historical data from time \(t-1\). Accordingly, we aim to minimize the following objective over $T$ time steps and $K$ clients:

    \begin{equation}\label{eq: objective-2}
    \begin{aligned}
        &\sum\limits_{t=1}^{T}\sum\limits_{k=1}^{K}\sum\limits_{c\in \mathcal{C}_t} \alpha_{k,c}^{t-1} F_k^{t-1}(\boldsymbol{w}_c^{t})+\alpha_{k,c}^{t} F_k^{t}(\boldsymbol{w}_c^{t})\\
        \end{aligned}
    \end{equation}




\begin{figure}[t]
    \centering
    \includegraphics[width=1\textwidth]{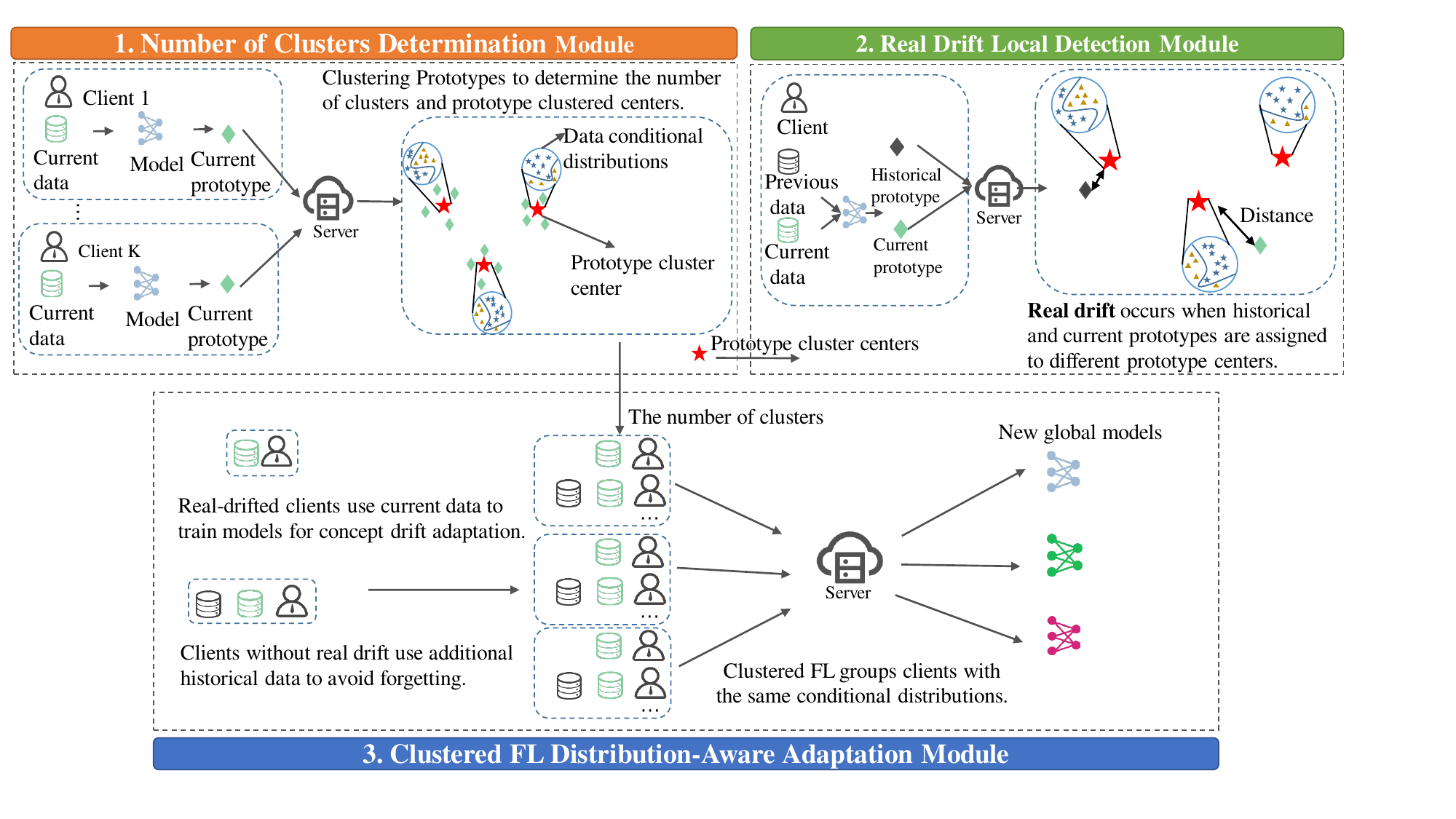}
    \caption{An overview of the proposed framwork FedDAA.}
    \label{fig:algorithm framework}
\end{figure}
\section{Methodology}\label{sec:Methodology}
In this section, we first provide an overview of FedDAA. We then describe each of its three core components individually, and conclude with the design of the full algorithm.

\subsection{Overview}
In FL, real drift may occur at different times across clients, resulting in both temporal and spatial data heterogeneity.
To address these challenges, we propose FedDAA, a novel clustered FL framework that considers multiple sources of concept drift. FedDAA is designed to simultaneously adapt to new data distributions while selectively retaining useful historical knowledge.
To mitigate spatial heterogeneity, we group clients with similar conditional distributions $P(y|\mathbf{x})$ to collaboratively train multiple global models.
To handle temporal heterogeneity, we detect and distinguish real drift from virtual and label drift, followed by tailored adaptation strategies.
FedDAA contains the following three main modules, shown in Fig.~\ref{fig:algorithm framework}.
i) Number of Clusters Determination (NCD) module. The NCD module determines the number of clusters for each time step by clustering client prototypes that represent their conditional distributions.
ii) Real Drift Local Detection (RDLD) module. The RDLD module detects whether a client experiences real drift by obtaining its data prototypes at current and previous time steps and identifying changes in their prototype cluster assignments.
iii) Clustered FL Distribution-Aware Adaptation (DAA) module. The DAA module implements adaptive strategies tailored to the sources of drift detected in each client, enabling effective model adaptation while preserving valuable historical knowledge.

\subsection{Number of Clusters Determination}\label{sec:Determine the Cluster Number}
Due to the number of clusters will probably changes over time, we propose the Number of Clusters Determination (NCD) module to determine the optimal number of clusters for each time step $t$. First, we design a data prototype to capture the data conditional distribution information for each client. 
Second, we cluster data prototypes from all clients to determine the optimal number of clusters. In the following, we give the definition of data prototype of a client.

A data prototype \( \mathbf{P}_k^{t} \) of a client $k$ at time $t$ is defined as a matrix that encapsulates the conditional distribution information of dataset $S_{k}^{t}$, serving as a compact representation of the data conditional distribution \(P(y|\mathbf{x})\). The data prototype of a client $k$ for a dataset $S_{k}^{t}$ is defined as follows.
\begin{definition}[Data Prototype of Client \(k\) for Dataset \(S_{k}^{t}\)  at Time Step \(t\)]
    We define the data prototype \(\mathbf{P}_{k}^{t} \in \mathbb{R}^{R \times R}\) for dataset \(S_{k}^{t}\) as
    \begin{multline}
    \mathbf{P}_{k}^{t} = \Biggl[ \frac{1}{N_{k,1}^{t}} \sum_{i=1}^{N_{k}^{t}} \boldsymbol{1}_{\{y_{k,i}^{t}=1\}} \, h(\boldsymbol{w}^{t}, \mathbf{x}_{k,i}^{t}), \; \cdots, \; \frac{1}{N_{k,R}^{t}} \sum_{i=1}^{N_{k}^{t}} \boldsymbol{1}_{\{y_{k,i}^{t}=R\}} \, h(\boldsymbol{w}^{t}, \mathbf{x}_{k,i}^{t}) \Biggr],
    \label{eq:prototype}
    \end{multline}
    where \(R\) denotes the total number of classes in the dataset \(S_{k}^{t}\),  and \(N_{k,r}^{t}\) represents the number of samples in \(S_{k}^{t}\) that belong to the \(r\)-th class for each \(r=1,\ldots,R\). The indicator function \(\boldsymbol{1}_{\{y_{k,i}^{t}=r\}}\) takes the value $1$ when \(y_{k,i}^{t}=r\) and $0$ otherwise. \(h(\boldsymbol{w}^{t}, \mathbf{x}_{k,i}^{t})\) is the output vector produced by the model parameterized by \(\boldsymbol{w}^{t}\) for the input \(\mathbf{x}_{k,i}^{t}\).

\end{definition}


After obtaining data prototypes, each client sends its prototype to the server. The server clusters these prototypes (e.g., via $K$-means algorithm) and determines the optimal cluster number using silhouette scores. We then use the cluster centers in the RDLD module to detect real drift per client.
Pseudocode for the NCD module is in Appendix~\ref{sec:appendix Determine the Cluster Number}.

\subsection{Concept Drift Detection and Adaptation}\label{sec:Real Drift Detection}
\textbf{Real drift detection.} To detect whether a client experiences real drift, we propose a Real Drift Local Detection (RDLD) module. The main idea of RDLD module is to utilize the the prototype cluster centers from NCD module in Section~\ref{sec:Determine the Cluster Number} to perform detection, where the cluster centers contain knowledge of different data conditional distributions.  First, for each client $k$, we use Eq.~\eqref{eq:prototype} to calculate the data prototypes $\mathbf{P}^{t}_{k}$ based on dataset $S_{k}^{t}$ at time $t$ and $\mathbf{P}^{t-1}_{k}$ based on dataset $S_{k}^{t-1}$ at time $t-1$, respectively.
Second, we compute the distances between $\mathbf{P}^{t}_{k}$ and $\mathbf{P}^{t-1}_{k}$ to the prototype cluster centers obtained by the NCD module, respectively.  Let $\mathbf{P}_{c}^{t}$ to denote the prototype cluster centers, where $c=1,...,C^{t}$, \(C^{t}\) is the number of clusters at time $t$. we use the following equation to calculate distances between $\mathbf{P}^{t}_{k}$ to the prototype cluster center $\mathbf{P}_{c}^{t}$.
\begin{align}\label{eq:Euclidean distance}
d(\mathbf{P}^{t}_{k}, \mathbf{P}_{c}^{t}) 
&= \|\mathbf{P}^{t}_{k} - \mathbf{P}_{c}^{t}\|_2.
\end{align}
Third, we assign each prototype to a cluster by selecting the prototype cluster center that is closest in Euclidean distance. Let \(c_{k}^{t-1}\) be the cluster to which prototype \(\mathbf{P}_{k}^{t-1}\) is assigned, and \(c_{k}^{t}\) be the cluster for prototype \(\mathbf{P}_{k}^{t}\). Specifically, these clusters are determined by the following equations:
\begin{equation}
\begin{aligned}
c_{k}^{t-1} &= \mathop{\arg\min}_{c \in \{1,\dots,C^{t}\}} d(\mathbf{P}_{k}^{t-1}, \mathbf{P}_{c}^{t}), \quad
c_{k}^{t}   &= \mathop{\arg\min}_{c \in \{1,\dots,C^{t}\}} d(\mathbf{P}_{k}^{t}, \mathbf{P}_{c}^{t}).
\end{aligned}
\label{eq:cluster_assignment}
\end{equation}
Finally, if \(c_{k}^{t-1} \neq c_{k}^{t}\) for a client \(k\), then the change in cluster assignment indicates that client \(k\) is experiencing real drift. The pseudocode for the RDLD module is provided in Appendix~\ref{sec:appendix Real Drift Detection}.

\textbf{Concept drift adaptation.} To help clients adapt to real and virtual/label drift, we propose Distribution-Aware Adaptation (DAA) module. When real drift is detected in a client $k$, client $k$ will update their local models using the current dataset from time step $t$; otherwise, client $k$ conducts local updating using both datasets from time steps $t$ and $t-1$. 

\subsection{FedDAA}
In this section, we present FedDAA that contains the three main modules described above. 
The pseudocode for FedDAA is provided in Appendix~\ref{sec:appendix Concept Drift Adaptation}.
For each time step $t$, we separate clients with different conditional distributions $P(y^{t}|\mathbf{x}^{t})$ into distinct clusters, while keeping clients with the same conditional distribution in one cluster in clustered FL~\cite{DBLP:conf/icml/GuoTL24}.
First, we use the NCD module to determine the number of clusters $C^{t}$. Then, we use the RDLD module to detects whether each client experiences real drift. Third, based on the drift type identified, the DAA module decides whether to incorporate historical data for local model training.
After obtaining the number of clusters and the training data for each client, we want to learn multiple global models  $\boldsymbol{w}_1^{t}, \dots, \boldsymbol{w}_{C}^{t}$  and the clustering weights $ \hat{\alpha}_{k,c}^{t}$ to cluster $c$ (approximation of $ \alpha_{k,c}^{t}$ in Eq.~\eqref{eq: objective-2}, which is unknown in practical).
Let vector $\mathbf{w}^t=[\boldsymbol{w}_1^{t}, \dots, \boldsymbol{w}_{C}^{t}]$ and $\boldsymbol{\hat{\alpha}}^{t} = [\hat{\alpha}_{1,1}^{t}, \dots, \hat{\alpha}_{K,C}^{t}] \in~\mathbb{R}^{K \times C}$.
Typically, $\sum_{c=1}^{C} \hat{\alpha}_{k,c}^{t} = 1$,  $\forall k $ clients. 
To separate clients with different conditional distributions into distinct clusters, the objective function
 $ \mathcal{L}(\mathbf{w}^{t},\boldsymbol{\hat{\alpha}}^{t})$ should satisfy the following two properties.



\textbf{Avoiding real drift within cluster.} Maximizing $ \mathcal{L}(\mathbf{w}^{t},\boldsymbol{\hat{\alpha}}^{t})$ should prevent real drift within the same cluster.
If $(\mathbf{x}^{t}, y^{t})$ exhibits a real drift with respect to cluster  $c$, the conditional probability $ \mathcal{P}(y^{t}|\mathbf{x}^{t}; \boldsymbol{w}_{c}^{t}) $ will be small.

\textbf{Decoupling real drift with virtual/label drift.} 
Maximizing $ \mathcal{L}(\mathbf{w}^{t}, \boldsymbol{\hat{\alpha}}^{t}) $ should decouple real drift from virtual or label drift. 
If \( (\mathbf{x}^{t}, y^{t}) \) has virtual or label drift with respect to cluster  $c$, the marginal probability \( \mathcal{P}(y^{t}; \boldsymbol{w}_{c}^{t}) \) or \( \mathcal{P}(\mathbf{x}^{t}; \boldsymbol{w}_{c}^{t}) \) will be small. 

 Then the objective function at time step $t$ is presented in the following Eq.~\eqref{eq:objective funtion}. The key terms $\mathcal{I}(\mathbf{x}^{t}_{k,i}, y^{t}_{k,i}; \boldsymbol{w}_{c}^{t})$  are designed to satisfy the two properties outlined above.  Specifically, we define $\mathcal{I}(\mathbf{x}^{t}_{k,i}, y^{t}_{k,i}; \boldsymbol{w}_{c}^{t}) = \frac{\mathcal{P}(\mathbf{x}^{t}_{k,i}, y^{t}_{k,i}; \boldsymbol{w}_{c}^{t})}{\mathcal{P}(\mathbf{x}^{t}_{k,i}; \boldsymbol{w}_{c}^{t}) \mathcal{P}(y^{t}_{k,i}; \boldsymbol{w}_{c}^{t})} = \frac{\mathcal{P}(y^{t}_{k,i}|\mathbf{x}^{t}_{k,i}; \boldsymbol{w}_{c}^{t})}{\mathcal{P}(y^{t}_{k,i}; \boldsymbol{w}_{c}^{t})} = \frac{\mathcal{P}(\mathbf{x}^{t}_{k,i}|y^{t}_{k,i}; \boldsymbol{w}_{c}^{t})}{\mathcal{P}(\mathbf{x}^{t}_{k,i}; \boldsymbol{w}_{c}^{t})}$.

\begin{equation}
\begin{aligned}
\label{eq:objective funtion}
        \mathcal{L}(\mathbf{w}^{t}, \boldsymbol{\hat{\alpha}}^{t}) &= \frac{1}{N^{t}} \sum_{k=1}^K \sum_{i=1}^{N_{k}^{t}} \ln \left( \sum_{c=1}^{C^{t}} \hat{\alpha}_{k,c}^{t} \mathcal{I}(\mathbf{x}^{t}_{k,i}, y^{t}_{k,i}; \boldsymbol{w}_{c}^{t}) \right),\\
        \text{s.t.} \sum_{c=1}^{C^{t}} \hat{\alpha}_{k,c}^{t} &= 1, \, \forall k,
\end{aligned}
\end{equation}
where $N^{t} = \sum_{k=1}^{K}N_{k}^{t}$ for each time $t$. The detailed optimization steps for Eq.\eqref{eq:objective funtion} are provided in Appendix~\ref{appendix: Proofs of objective funtion}. 

\section{Convergence Analysis}\label{sec: Convergence Analysis}
In this section, we present the convergence analysis of our approach FedDAA. For each time step $t$, we run FedDAA with $\tau$ communication rounds to retrain models to adapt to concept drift. We focus on the training at time step $t$, and therefore omit all superscripts $t$ in the following.
We make the following assumptions on the local loss function $f(\mathbf{x},y,\boldsymbol{w})$. Assumptions~\ref{assumption: our 1}-\ref{assumption: our 6} are frequently used for non-convex analysis in the literature~\cite{DBLP:conf/nips/MarfoqNBKV21,DBLP:conf/icml/GuoTL24,DBLP:conf/iclr/ChenLSH19,DBLP:conf/nips/MertikopoulosHK20,DBLP:conf/iclr/LiHYWZ20,DBLP:conf/nips/WangLLJP20}. Assumptions~\ref{assumption: our 1}-~\ref{assumption: our 5} are used in the standard analysis and Assumption~\ref{assumption: our 6} formalizes the degree of dissimilarity among local objective functions. 

\begin{assumption}[Smoothness Assumption]\label{assumption: our 1} 
    Assume functions $ f(\boldsymbol{w}) $ are L-smooth, i.e. $ \nabla f(\boldsymbol{w}_1) - 
\nabla f(\boldsymbol{w}_2) \| \leq L \| \boldsymbol{w}_1 - \boldsymbol{w}_2 \| $.
\end{assumption}
\begin{assumption}[Bounded Gradient Assumption]\label{assumption: our 2} 
    Assume that the gradient of local objective functions $ \nabla f(\boldsymbol{w}) $ are bounded, i.e. 
$ \mathbb{E} \left[ \| \nabla f(\boldsymbol{w}) \|^2 \right] = \frac{1}{N} \sum_{k=1}^K \sum_{i=1}^{N_k} \| \nabla f(\mathbf{x}_{k,i}, y_{k,i}, \boldsymbol{w}) \|^2 \leq \sigma^2 $. 
\end{assumption}

\begin{assumption}[Unbiased Gradients and Bounded Variance] Each client $k$, $k\in \mathcal{K}=\{1,2,...,K\}$, can sample a random batch $\xi$ from $S_k$ and compute an unbiased estimator $g_k(\xi, \boldsymbol{w}_c)$ of the local gradient with bounded variance, i.e., 
\begin{equation}
    \mathbb{E}_{\xi} [g_k(\xi, \boldsymbol{w}_c)] = \frac{1}{N_k} \sum_{k=1}^{N_k} 
\nabla_{\boldsymbol{w}_c} f(\mathbf{x}_{k,i}, y_{k,i}, \boldsymbol{w}_c),
\end{equation}
\begin{equation}
    \mathbb{E}_{\xi} \left\| g_k(\xi, \boldsymbol{w}_c) - \frac{1}{N_k} \sum_{i=1}^{N_k} 
\nabla_{\boldsymbol{w}_c} f(\mathbf{x}_{k,i}, y_{k,i}, \boldsymbol{w}_c) \right\|^2 \leq \delta^2.
\end{equation}
\label{assumption: our 5}
\end{assumption}

\begin{assumption}[Bounded Dissimilarity]\label{assumption: our 6} There exist $\beta$ and $G$ such that:
\begin{equation}
    \begin{aligned}
    &\sum_{k=1}^{K} \frac{1}{N} \left\| \sum_{i=1}^{N_k} \sum_{c=1}^{C} \nabla_{\boldsymbol{w}_c} f(\mathbf{x}_{k,i}, y_{k,i}, \boldsymbol{w}_c) \right\|^2 \leq G^2 + \beta^2 \left\| \frac{1}{N} \sum_{k=1}^{K} \sum_{i=1}^{N_k} \sum_{c=1}^{C} \nabla_{\boldsymbol{w}_c} f(\mathbf{x}_{k,i}, y_{k,i}, \boldsymbol{w}_c) \right\|^2,
    \end{aligned}
\end{equation}

where $N=\sum_{k=1}^{K}N_{k}$.
\end{assumption}

\begin{assumption}[Expectation of the Number of Clusters]\label{assumption: our 3} 
    There is an optimal number of clusters $C^{*}$. Assume that the number of clusters $C$ follows a certain distribution, and $\mathbb{E}C=C^{*}$. 
\end{assumption}


\begin{theorem}
\label{corollary: federated our}
     Under Assumptions~\ref{assumption: our 1}, \ref{assumption: our 2}, \ref{assumption: our 5}, \ref{assumption: our 6} and \ref{assumption: our 3}, clients use SGD as local solver with learning rate $\eta \leq \frac{1}{\sqrt{\tau}}$, and run our method for $\tau$ communication rounds and $E$ local epochs for each local update. Let $ \Delta_{\xi,c}^{2} = \mathbb{E}_{\xi}\|\boldsymbol{w}_{c}^{\tau}-\boldsymbol{w}_{c}^{0}\|^{2}$ for all $c$, and $\Delta_{\xi}^2 = \max \{\Delta_{\xi,c}^{2},c=1,2,3,...,C.\}$. We have
\begin{equation}
    \mathbb{E}_{C}\mathbb{E}_{\xi}[\mathcal{L}(\mathbf{w}^{\tau}, \hat{\boldsymbol{\alpha}}^{\tau},C^{*}) - \mathcal{L}(\mathbf{w}^{\tau}, \hat{\boldsymbol{\alpha}}^{\tau},C)] \leq \mathcal{L}(\mathbf{w}^{*}, \hat{\boldsymbol{\alpha}}^{*},C^{*}) -\mathcal{L}(\mathbf{w}^{0}, \hat{\boldsymbol{\alpha}}^{0},C^{*})+B,
\end{equation}
where 
\begin{equation}
    B= -\frac{EC^{*}\Delta_{\xi}^{2}}{4\eta}+ 2(E\eta^2L(\eta L+1)\sigma^2+2E^{3}\eta^{3}L^{2}G^{2}).
\end{equation}
\end{theorem}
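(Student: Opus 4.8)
The plan is to treat the maximization of $\mathcal{L}$ as a gradient-ascent process, run a local-SGD progress argument per communication round, telescope across the $\tau$ rounds, and finally take the expectation over the random cluster number $C$. Because $\mathcal{L}$ is maximized, I would invoke the $L$-smoothness bound (Assumption~\ref{assumption: our 1}) in its ascent form: writing the aggregated server update as $\Delta^{r}=\mathbf{w}^{r+1}-\mathbf{w}^{r}$,
\begin{equation}
\mathcal{L}(\mathbf{w}^{r+1},\hat{\boldsymbol{\alpha}}^{r},C)\ge\mathcal{L}(\mathbf{w}^{r},\hat{\boldsymbol{\alpha}}^{r},C)+\langle\nabla\mathcal{L}(\mathbf{w}^{r},\hat{\boldsymbol{\alpha}}^{r},C),\Delta^{r}\rangle-\frac{L}{2}\|\Delta^{r}\|^{2}.
\end{equation}
The $\hat{\boldsymbol{\alpha}}$ block is updated in closed form (EM-style) so that it maximizes $\mathcal{L}$ for the current $\mathbf{w}$ at each round; I would therefore treat its contribution as a nonnegative ascent term and concentrate the quantitative work on the $\mathbf{w}_{c}$ blocks.

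The core computation is to expand $\Delta^{r}$ as a sum of $E$ local stochastic gradient steps aggregated over clients and clusters, and to split it into the exact full gradient, the client-drift term (divergence of the $E$-step local trajectory from a single gradient direction), and the stochastic noise. First I would bound the drift: unrolling the local recursion and using the bounded-gradient constant (Assumption~\ref{assumption: our 2}) yields a per-round drift of order $E^{3}\eta^{3}L^{2}$, which after invoking bounded dissimilarity (Assumption~\ref{assumption: our 6}, constant $G^{2}$) produces the $2E^{3}\eta^{3}L^{2}G^{2}$ contribution in $B$. Next I would take $\mathbb{E}_{\xi}$ via Assumption~\ref{assumption: our 5}: unbiasedness eliminates the cross term between the true gradient and the noise, while the bounded variance and the bounded-gradient constant $\sigma^{2}$, fed through the quadratic $\frac{L}{2}\|\Delta^{r}\|^{2}$ penalty, combine into the $E\eta^{2}L(\eta L+1)\sigma^{2}$ contribution. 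The inner-product term supplies the positive ascent, which I would re-express through the cumulative displacement $\|\mathbf{w}_{c}^{\tau}-\mathbf{w}_{c}^{0}\|^{2}$ after telescoping.

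Telescoping the per-round inequality from $r=0$ to $\tau-1$ collapses the left side to $\mathcal{L}(\mathbf{w}^{\tau},\hat{\boldsymbol{\alpha}}^{\tau},C)-\mathcal{L}(\mathbf{w}^{0},\hat{\boldsymbol{\alpha}}^{0},C)$, while the accumulated ascent becomes a sum over clusters of squared displacements; the learning-rate condition $\eta\le 1/\sqrt{\tau}$ keeps the higher powers of $\eta$ in the error terms controlled. Bounding each cluster's displacement by $\Delta_{\xi}^{2}=\max_{c}\Delta_{\xi,c}^{2}$, the aggregated ascent over $C$ clusters and $E$ epochs scales as $\tfrac{EC\,\Delta_{\xi}^{2}}{\eta}$. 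I would then add and subtract the $C^{*}$-evaluated quantities and use global optimality, $\mathcal{L}(\mathbf{w}^{*},\hat{\boldsymbol{\alpha}}^{*},C^{*})\ge\mathcal{L}(\mathbf{w}^{\tau},\hat{\boldsymbol{\alpha}}^{\tau},C^{*})$, to isolate the target gap $\mathcal{L}(\cdot,C^{*})-\mathcal{L}(\cdot,C)$ on the left while exposing $\mathcal{L}(\mathbf{w}^{*},\hat{\boldsymbol{\alpha}}^{*},C^{*})-\mathcal{L}(\mathbf{w}^{0},\hat{\boldsymbol{\alpha}}^{0},C^{*})$ on the right.

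The hard part is the final step: converting the cluster-count-dependent quantities into $C^{*}$ and producing the advertised gap $\mathbb{E}_{C}[\mathcal{L}(\mathbf{w}^{\tau},\hat{\boldsymbol{\alpha}}^{\tau},C^{*})-\mathcal{L}(\mathbf{w}^{\tau},\hat{\boldsymbol{\alpha}}^{\tau},C)]$. The only structural tool is Assumption~\ref{assumption: our 3}, i.e.\ $\mathbb{E}C=C^{*}$, so the plan is to keep every $C$-dependence \emph{linear}: the objective is a single log-sum evaluated over the $c=1,\dots,C$ blocks, its gradient decomposes blockwise across clusters, and $\Delta_{\xi}^{2}$ is a per-cluster maximum, so the aggregated ascent scales exactly as $C\cdot\Delta_{\xi}^{2}$. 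Applying $\mathbb{E}_{C}$ and invoking $\mathbb{E}C=C^{*}$ then replaces $C$ by $C^{*}$ wherever it enters linearly, which yields both the $-\tfrac{EC^{*}\Delta_{\xi}^{2}}{4\eta}$ ascent term and the $C^{*}$-evaluated optimality comparison. The delicate point is ensuring that no factor of $C$ is trapped inside a nonlinear (quadratic or logarithmic) expression before the expectation is applied; I expect to handle this by upper-bounding the log-sum gradient blockwise and by exploiting the uniform-over-clusters definition of $\Delta_{\xi}^{2}$, so that the cluster count enters only through the outer summation and the linearity required by $\mathbb{E}C=C^{*}$ is genuinely available. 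Assembling the linearized ascent term, the two error contributions, and the optimality comparison then gives the stated bound with constant $B$.
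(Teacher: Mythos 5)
Your overall architecture matches the paper's: a per-round progress inequality, telescoping over the $\tau$ rounds, lower-bounding the accumulated progress by the cumulative displacements $\|\boldsymbol{w}_c^{\tau}-\boldsymbol{w}_c^{0}\|^2$, bounding these by $C\,\Delta_{\xi}^2$, and finally using the linearity of the $C$-dependence together with $\mathbb{E}C=C^{*}$ (Assumption~\ref{assumption: our 3}) and optimality at $C^{*}$ to produce the stated gap. You also correctly trace the two error terms $E\eta^{2}L(\eta L+1)\sigma^{2}$ and $2E^{3}\eta^{3}L^{2}G^{2}$ to the stochastic-noise and client-drift contributions, and you correctly flag the need to keep every $C$-dependence linear before taking $\mathbb{E}_{C}$; this is exactly how the paper closes the argument (via the tower property and $\Delta_{\xi}^2=\max_c\Delta_{\xi,c}^2$).

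The gap is in your central step: you propose to apply the $L$-smoothness ascent lemma directly to $\mathcal{L}$ and to identify $\langle\nabla\mathcal{L},\Delta^{r}\rangle$ with the ascent produced by the algorithm's update. Two things break here. First, Assumption~\ref{assumption: our 1} asserts smoothness of the per-sample loss $f$, not of $\mathcal{L}$; since $\mathcal{L}$ is a log of a mixture of $\exp(-f)$ terms, its gradient in $\boldsymbol{w}_c$ carries the $\boldsymbol{w}$-dependent responsibilities $\gamma_{k,i;c}$, and smoothness does not transfer without additional work that your plan does not supply. Second, the algorithm does not take gradient steps on $\mathcal{L}$: by Eq.~\eqref{eq:our w} the local update direction is the $\gamma^{j}$-weighted gradient with the responsibilities frozen at the E-step values, so the inner product with $\nabla\mathcal{L}$ is not automatically $-\eta\|\nabla\mathcal{L}\|^{2}$ plus noise. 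The paper resolves both issues at once by constructing the surrogate $g_k^{j}$ (a $\gamma^{j}$-weighted linear combination of the $f$'s, hence $L$-smooth by linearity) and verifying the four majorization--minimization conditions ($g_k^{j}\ge-\mathcal{L}_k$, gradient matching, and equality at the previous iterate), after which the federated surrogate optimization bound of FedEM can be invoked as a black box. Your remark that the $\hat{\boldsymbol{\alpha}}$ update ``can be treated as a nonnegative ascent term'' is precisely the content of those majorization conditions, but it needs to be established, not assumed; without the surrogate construction the per-round inequality you write down is not justified by the stated assumptions.
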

The proof of Theorem~\ref{corollary: federated our} is in appendix~\ref{sec:appendix Convergence Analysis for Federated Clustering Method}.   Theorem~\ref{corollary: federated our} shows that using the number of clusters $C$ determined by FedDAA, $\mathbb{E}_{C}\mathbb{E}_{\xi}[\mathcal{L}(\mathbf{w}^{\tau}, \hat{\boldsymbol{\alpha}}^{\tau},C^{*}) - \mathcal{L}(\mathbf{w}^{\tau}, \hat{\boldsymbol{\alpha}}^{\tau},C)]$ can be bounded by a constant, indicating the convergence of FedDAA.
\section{Experiments}

\subsection{Experiments Setup}\label{sec: Experiments Setup}

\textbf{Datasets and models.} We conduct experiments on three datasets: Fashion-MNIST~\cite{DBLP:journals/corr/abs-1708-07747}, CIFAR-10~\cite{krizhevsky2009learning}, and CIFAR-100. For Fashion-MNIST, we employ a two-layer CNN model. We utilize a ResNet-18~\cite{DBLP:conf/cvpr/HeZRS16} model for CIFAR-10, and a MobileNet~\cite{DBLP:journals/corr/HowardZCKWWAA17} for CIFAR-100.

\textbf{Baselines.} 
We compare our method against methods in the following categories:  
i) single-model methods: FedAvg~\cite{DBLP:conf/aistats/McMahanMRHA17} with simple retraining, Adaptive-FedAvg~\cite{DBLP:conf/ijcnn/CanonacoBMR21}, and Flash~\cite{DBLP:conf/icml/PanchalCMMSMG23};  
ii) multiple-model methods: FedDrift~\cite{DBLP:conf/aistats/JothimurugesanH23}, FedRC~\cite{DBLP:conf/icml/GuoTL24} with simple retraining, and Oracle.

\textbf{Sources of concept drift.} To construct three sources of concept drift, we use the following methods to generated datasets. i)~Real drift. We change the labels of clients' data to change data conditional distribution $P(y|\mathbf{x})$, e.g, change label $y$ to $(y+1)\bmod R$ as in previous works~\cite{DBLP:conf/aistats/JothimurugesanH23,DBLP:conf/icml/GuoTL24,DBLP:conf/icml/PanchalCMMSMG23}. ii)~Virtual drift. We rotate the images in clients' data to change data marginal distribution $P(\mathbf{x})$, as in previous works~\cite{DBLP:conf/iclr/NguyenTGTB22,DBLP:conf/nips/NguyenTL22,DBLP:conf/iclr/WangM23,DBLP:conf/uai/PhamZ024}. iii)~Label drift. We employ the Dirichlet distribution to generate datasets with different marginal distribution $P(y)$, as in previous works~\cite{DBLP:journals/corr/abs-1909-06335,DBLP:conf/icml/GuoTL24,DBLP:conf/nips/MarfoqNBKV21,DBLP:conf/icml/PanchalCMMSMG23}.

\textbf{Concept drift settings.} We run 6 time steps in total. In each time steps, clients obtain new data and retrain models. i) To simulate the scenario where the number of clusters varies over time, we increase the number of data conditional distributions of all clients from 2 to 4 between time steps 1 and 3, and maintain 4 conditional distributions from time step 4 to time step 6. ii) To simulate a scenario involving both real drift and label drift, clients collect new datasets with new conditional distributions $P(y|\mathbf{x})$ and marginal label distributions $P(y)$ at each time step. In addition, to simulate virtual drift, we rotate the input images at each time step: at time step 1, images are not rotated; at time step 2, they are rotated by 120 degrees; and at time step 3, by 240 degrees. This rotation pattern repeats every three time steps.  All results are average over 5 runs (mean ± std). Appendix~\ref{sec:appendix Experiment} has more details.

\begin{table}[t]
    \caption{Comparison with baselines. Average test accuracy across all clients and all time steps.}
    \centering
    \begin{tabular}{@{}cccc@{}}
        \toprule
        Method                     & F-MNIST           & CIFAR-10           & CIFAR-100 \\ \midrule
        FedAvg                     & $42.53 \pm 0.34$  & $35.21 \pm 0.51$   & $27.15 \pm 0.62$ \\
        A-FedAvg                   & $51.15 \pm 0.43$  & $42.44 \pm 0.58$   & $34.11 \pm 0.48$ \\
        Flash                      & $50.22 \pm 0.40$  & $43.58 \pm 0.72$   & $33.83 \pm 0.55$ \\ \cline{1-4}
        FedRC                      & $56.82 \pm 0.61$  & $52.88 \pm 0.64$   & $37.52 \pm 0.69$ \\ 
        FedDrift                   & $62.52 \pm 0.57$  & $58.18 \pm 0.80$   & $44.36 \pm 0.75$ \\
        Ours                       & $\mathbf{71.04 \pm 0.42}$ & $\mathbf{66.02 \pm 0.59}$ & $\mathbf{52.67 \pm 0.50}$ \\ \cline{1-4}
        Oracle                     & $72.26 \pm 0.29$  & $66.62 \pm 0.47$   & $52.96 \pm 0.38$ \\ \bottomrule
    \end{tabular}
    \label{table:Comparison with baselines.}
\end{table}

\begin{table}[t]
    \caption{Performance comparison on forgetting mitigation. Test accuracy across different tasks at time step $t=6$ on CIFAR-10. Here, (1, $0^\circ$) is short for (concept 1, 0 degrees). "Concepts 1--4" refer to different conditional distributions. }
    \centering
    \resizebox{\linewidth}{!}{
    \begin{tabular}{@{}cccccccc@{}}
        \toprule
        Tasks                     & FedAvg & A-FedAvg & Flash & FedRC & FedDrift & Ours & Oracle \\ \midrule
        (1, $0^\circ$)       & 9.97 ± 0.42 & 11.83 ± 0.77 & 10.26 ± 0.58 & 9.97 ± 0.36 & 36.72 ± 0.41 & \textbf{65.61 ± 0.29} & 69.20 ± 0.66 \\
        (1, $120^\circ$)     & 9.94 ± 0.73 & 11.94 ± 0.61 & 11.94 ± 0.14 & 9.94 ± 0.39 & 40.64 ± 0.44 & \textbf{69.02 ± 0.12} & 68.87 ± 0.52 \\
        (1, $240^\circ$)     & 7.67 ± 0.81 & 10.10 ± 0.24 & 10.10 ± 0.65 & 7.67 ± 0.33 & 69.46 ± 0.70 & \textbf{70.71 ± 0.46} & 70.15 ± 0.20 \\ \cline{1-8}
        (2, $0^\circ$)       & 20.25 ± 0.48 & 14.14 ± 0.89 & 16.24 ± 0.57 & 20.25 ± 0.62 & 36.95 ± 0.30 & \textbf{65.50 ± 0.59} & 64.33 ± 0.76 \\
        (2, $120^\circ$)     & 18.80 ± 0.18 & 14.94 ± 0.63 & 13.56 ± 0.44 & 18.82 ± 0.53 & 44.36 ± 0.91 & \textbf{68.62 ± 0.27} & 70.32 ± 0.37 \\
        (2, $240^\circ$)     & 43.24 ± 0.39 & 24.75 ± 0.58 & 24.65 ± 0.36 & 43.24 ± 0.41 & 69.62 ± 0.69 & \textbf{70.75 ± 0.12} & 70.31 ± 0.43 \\ \cline{1-8}
        (3, $0^\circ$)       & 19.81 ± 0.22 & 18.98 ± 0.73 & 19.20 ± 0.63 & 19.81 ± 0.21 & 27.87 ± 0.39 & \textbf{60.23 ± 0.25} & 58.96 ± 0.70 \\
        (3, $120^\circ$)     & 20.40 ± 0.60 & 20.80 ± 0.48 & 21.60 ± 0.66 & 20.40 ± 0.34 & 48.02 ± 0.81 & \textbf{70.55 ± 0.49} & 70.81 ± 0.57 \\
        (3, $240^\circ$)     & 37.03 ± 0.40 & 36.08 ± 0.58 & 37.09 ± 0.22 & 37.03 ± 0.27 & 70.93 ± 0.90 & \textbf{71.52 ± 0.16} & 71.45 ± 0.51 \\ \cline{1-8}
        (4, $0^\circ$)       & 11.07 ± 0.37 & 15.21 ± 0.85 & 16.35 ± 0.30 & 11.07 ± 0.46 & 35.65 ± 0.62 & \textbf{64.41 ± 0.43} & 65.43 ± 0.75 \\
        (4, $120^\circ$)     & 10.67 ± 0.49 & 16.03 ± 0.91 & 17.56 ± 0.63 & 10.67 ± 0.18 & 41.13 ± 0.47 & \textbf{69.06 ± 0.24} & 67.95 ± 0.59 \\
        (4, $240^\circ$)     & 10.56 ± 0.33 & 25.71 ± 0.42 & 26.12 ± 0.51 & 10.56 ± 0.38 & 70.46 ± 0.54 & \textbf{71.21 ± 0.29} & 71.31 ± 0.61 \\ \bottomrule
    \end{tabular}
    }
    \label{table:Comparison with baselines in avoiding forgetting.}
\end{table}





\subsection{Experimental Results}
\textbf{Comparison with baselines using average accuracy over time.} 
The results in Table~\ref{table:Comparison with baselines.} show that our method consistently outperforms all baseline methods across three  datasets. 
Compared to the strongest baseline (FedDrift), our method improves accuracy by 8.52\% on Fashion-MNIST, 7.83\% on CIFAR-10, and 8.31\% on CIFAR-100, demonstrating the effectiveness of our dynamic clustering and distribution-aware adaptation for multi-source concept drift.
Compared to single-model baselines like FedAVG and A-FedAvg, our method achieves higher accuracy by grouping clients with different conditional distributions into separate clusters, each with its own specialized global model.  
Moreover, even compared to prior clustered FL approaches (e.g., FedRC and FedDrift), our method exhibits better adaptability to multi-source drift by explicitly detecting real drift and applying distinct adaptation strategies for real and virtual/label drift.
The small performance gap between our method and the Oracle, which assumes full knowledge of client groupings and data dynamics, further highlights the robustness of our approach.
\begin{table}[t]    
    \caption{Ablation study on different modules of our method. Average accuracy for our method across all clients and all time steps.``w/o'' stands for ``without''.}
    \centering
    \begin{tabular}{@{}p{2cm}ccc@{}}
        \toprule
        Method            & F-MNIST       & CIFAR-10      & CIFAR-100    \\ \midrule
        w/o NCD          & 58.26 $\pm$ 0.42    & 56.93 $\pm$ 0.75  & 40.18 $\pm$ 0.37    \\
        w/o RDLD         & 68.19 $\pm$ 0.83    & 62.46 $\pm$ 0.24  & 46.79 $\pm$ 0.89    \\
        w/o DAA           & 65.28 $\pm$ 0.61    & 61.23 $\pm$ 0.46  & 45.14 $\pm$ 0.51    \\ 
        ours              & 71.04 $\pm$ 0.35    & 66.02 $\pm$ 0.72  & 52.67 $\pm$ 0.22    \\ \bottomrule
    \end{tabular}  
    \label{table:Ablation study.}
\end{table}


\begin{table}[t]
  \small
  \begin{minipage}[t]{0.46\linewidth} 
    \caption{Effect of sampling rate. Average test accuracy across all clients and all time steps on CIFAR-10. SR is short for sampling rate.}
    \centering
    \resizebox{\linewidth}{!}{
    \begin{tabular}{@{}cccc@{}}
        \toprule
        SR     & F-MNIST       & CIFAR-10      & CIFAR-100    \\ \midrule
        0.2    & 70.56 $\pm$ 0.42    & 65.60 $\pm$ 0.33  & 51.21 $\pm$ 0.85     \\
        0.5    & 71.04 $\pm$ 0.75    & 66.02 $\pm$ 0.56  & 52.67 $\pm$ 0.46     \\
        0.8    & 72.39 $\pm$ 0.37    & 67.42 $\pm$ 0.91  & 53.14 $\pm$ 0.54     \\ 
        1      & 72.86 $\pm$ 0.22    & 67.55 $\pm$ 0.64  & 53.51 $\pm$ 0.39     \\ \bottomrule
    \end{tabular}
    }
    \label{table:Effect of sampling rate.}
  \end{minipage}
  \hfill
  \begin{minipage}[t]{0.46\linewidth} 
    \caption{Effect of dirichlet parameter. Average test accuracy across all clients and all time steps on CIFAR-10. DP is short for dirichlet parameter.}
    \centering
    \resizebox{\linewidth}{!}{
    \begin{tabular}{@{}cccc@{}}
        \toprule
        DP  & F-MNIST       & CIFAR-10      & CIFAR-100    \\ \midrule
        0.5  & 67.94 $\pm$ 0.68    & 63.86 $\pm$ 0.47  & 50.65 $\pm$ 0.91    \\
        1    & 68.19 $\pm$ 0.32    & 64.57 $\pm$ 0.75  & 51.21 $\pm$ 0.38    \\
        2    & 69.56 $\pm$ 0.44    & 65.72 $\pm$ 0.59  & 52.06 $\pm$ 0.49    \\ \bottomrule
    \end{tabular}
    }
    \label{table:Effect of dirichlet parameter.}
  \end{minipage}
\end{table}
\textbf{Comparison with baselines on forgetting mitigation.}
Table~\ref{table:Comparison with baselines in avoiding forgetting.} shows the test accuracy on each task at time $t=6$, reflecting how well each method retains knowledge over time. Our method consistently achieves the best performance across all tasks. In contrast, single-model baselines such as FedAVG and A-FedAvg suffer severe forgetting, with accuracy on earlier tasks often falling below 20\%. Compared to clustered FL methods like FedRC and FedDrift, our approach achieves higher accuracy on both early and recent tasks. This suggests that using different adaptation strategies for real and virtual/label drift helps preserve useful historical knowledge more effectively.




\textbf{Ablation study on different modules of our method.} To evaluate the contribution of each key module in our method, we conduct an ablation study by removing the NCD, RDLD, and DAA modules respectively. The results in Table~\ref{table:Ablation study.} demonstrate that each module contributes positively to the overall performance. Removing the NCD module causes the largest drop in accuracy across all datasets, indicating its critical role in clustering clients effectively. The absence of the RDLD and DAA modules also leads to noticeable performance degradation, highlighting their importance in detecting real drift and adapting to multi-source concept drift.

\textbf{Effect of sampling rate.} Table~\ref{table:Effect of sampling rate.} reports the performance of our method under varying client sampling rates. As the sampling rate decreases from 1 to 0.2, the average accuracy only slightly declines across all datasets. This demonstrates that our method remains effective even when fewer clients participate in training.

\textbf{Effect of dirichlet parameter.} Table~\ref{table:Effect of dirichlet parameter.} presents the impact of varying the Dirichlet parameter, which controls label distribution heterogeneity. As the parameter decreases from 2 to 0.5, clients' data distributions become more heterogeneous, making the learning task more challenging. Despite this, FedDAA maintains strong performance, demonstrating its robustness to label drift.

\section{Conclusion}

In this work, we tackle the critical challenges of concept drift in FL. we introduce FedDAA, a dynamic clustered FL framework that adapts to multi-source concept drift while retaining useful historical knowledge. Theoretically, we establish rigorous guarantees on convergence rate under time-varying distributions, advancing the understanding of FL convergence in non-stationary environments. Experimental results demonstrate significant improvements, with our method outperforming state-of-the-art baselines like FedDrift by 7.84\%–8.52\% in accuracy across various datasets. These contributions open new possibilities for deploying robust FL systems in real-world dynamic applications such as IoT networks and personalized healthcare, where data distributions naturally evolve over time. Despite these promising results, our framework currently relies on a relatively simple rehearsal strategy to mitigate forgetting under concept drift. Investigating more advanced continual learning techniques in future work may further enhance the system's robustness and adaptability.


\appendix

\section{Algorithm}
\subsection{Number of Clusters Determination}\label{sec:appendix Determine the Cluster Number}
Algorithm~\ref{algo:K-means Based Determine Cluster Number Algorithm} presents the pseudocode of the NCD module at a given time step $t$. Lines~3--6: each client $k$ obtains the prototype $\mathbf{P}_{k}^{t}$. Lines~7--11: The server computes the average silhouette scores for each possible number of clusters. Line~12: Choose the optimal number of clusters that maximizes the average silhouette scores.

\textbf{K-means algorithm.} The K-means algorithm is a widely used clustering method. In our work, we use it to cluster the data prototypes $\mathbf{P}_{k}^{t}$ into \( C^{t} \) distinct clusters at time step $t$. The K-means algorithm iterates between assigning each prototype to its nearest cluster center and recalculating each cluster center as the mean of its assigned prototypes. 

\textbf{Average silhouette score for cluster number determination.} The average silhouette score is employed to assess the quality of clustering results. Silhouette score measures how similar a prototype is to its own cluster compared to other clusters, with values ranging from $-1$ to $1$. A higher silhouette score indicates better clustering performance. By computing the average silhouette scores for different numbers of clusters, we can identify the optimal number of clusters that maximizes the scores. 
To calculate the silhouette score \(s_{k}^{t}\) for a data prototype \(\mathbf{P}_{k}^{t}\), we proceed as follows. First, determine the average intra-cluster distance \(a_{k}^{t}\), which represents the average distance from \(\mathbf{P}_{k}^{t}\) to other prototypes within the same cluster. Next, find the average nearest-cluster distance \(b_{k}^{t}\), defined as the average distance from \(\mathbf{P}_{k}^{t}\) to the prototypes in the nearest neighboring cluster. Finally, the silhouette score $s_{k}^{t}$ is given by
\begin{equation}
s_{k}^{t} = \frac{b_{k}^{t} - a_{k}^{t}}{\max(a_{k}^{t}, b_{k}^{t})}.
\end{equation}

The average silhouette score \(\bar{s}_{c}^{t}\) for \(c\) clusters is then computed using Eq.~\eqref{eq: the average silhouette}.
\begin{equation}\label{eq: the average silhouette}
    \bar{s}_{c}^{t} = \sum_{k=1}^{K} s_{k}^{t}/K
\end{equation}

\begin{algorithm}[ht]
\caption{Number of Clusters Determination (NCD) module }
\label{algo:K-means Based Determine Cluster Number Algorithm}
\begin{algorithmic}[1]
\State \textbf{Input:} Client set $\mathcal{K} = \{1, \dots, K\}$, maximal number of clusters $M$, prototype $\mathbf{P}_{k}^{t}$ of client $k$, client prototypes set $\mathcal{R}_{\text{proto}}^{t}\gets \varnothing$, average silhouette scores~list $ \mathcal{J}^{t} \gets [\,\,]$  and average silhouette score $\bar{s}_{c}^{t}$ for the clustering result with $c$ clusters. 
\State \textbf{Output:} Optimal number of clusters $C^{t}$ and prototype cluster centers for time step $t$ 
\For{each client $k\in \mathcal{K}$}\Comment{Operations on the client side.}
    \State Client $k$ obtain $\mathbf{P}_{k}^{t}$ using Eq.~\eqref{eq:prototype} and send it to the server
    \State $\mathcal{R}_{\text{proto}}^{t} \gets \mathcal{R}_{\text{proto}}^{t} \cup \{\mathbf{P}_{k}^{t}\}$ \Comment{The server adds $\mathbf{P}_{k}^{t}$ to $\mathcal{R}_{\text{proto}}^{t}$.} 
\EndFor

\For{ each number of clusters $c = 2,\ldots,M$}\Comment{Operations on the server side.}
    \State The server groups prototypes in $\mathcal{R}_{\text{proto}}^{t}$ into $c$ clusters and obtain cluster centers using K-means algorithm
    \State Compute the average silhouette score $\bar{s}_{c}^{t}$ using Eq.~\eqref{eq: the average silhouette} based on the clustering result
    \State Append $\bar{s}_{c}^{t}$ to $\mathcal{J}^{t}$  
\EndFor
\State $C^{t} \gets \arg\max(\mathcal{J}^{t})$ \Comment{Select the number of clusters with max the average silhouette scores.}

\State \Return Optimal number of clusters $C^{t}$ and prototype cluster centers for time step $t$
\end{algorithmic}
\end{algorithm}

\subsection{Real Drift Local Detection}\label{sec:appendix Real Drift Detection}
The RDLD module outputs two disjoint client sets, $\mathcal{K}_{\text{drift}}$ and $\mathcal{K}_{\text{clean}}$, such that $\mathcal{K}_{\text{drift}} \cup \mathcal{K}_{\text{clean}} =\mathcal{K}$ and $\mathcal{K}_{\text{drift}} \cap \mathcal{K}_{\text{clean}} = \varnothing$. The set $\mathcal{K}_{\text{drift}}$ contains the clients that are experiencing real drift, whereas the set $\mathcal{K}_{\text{clean}}$ contains the clients without real drift. 
Algorithm~\ref{algo:Prototype based real drift detection method} presents the pseudocode of the RDLD module. Line~1: Initialize two client sets, i.e., $\mathcal{K}_{\text{drift}} \leftarrow \varnothing$ and $\mathcal{K}_{\text{clean}} \leftarrow \varnothing$. Lines~4--5: Each client $k$ obtains the prototypes of previous dataset $S_{k}^{t-1}$ and current dataset $S_{k}^{t}$, respectively. Lines~6--7: Use the \textit{Predict Cluster} function to determine the clusters to which the previous prototype $\mathbf{P}_{k}^{t-1}$ and current prototype $\mathbf{P}_{k}^{t}$ belong.  Lines~8--12: Perform real drift detection. If $c_{k}^{t-1} = c_{k}^{t}$, it indicates that the conditional distribution of client $k$'s data remains unchanged. In this case, client $k$ is added to the set $\mathcal{K}_{\text{clean}}$. Otherwise, the client is experiencing real drift and is therefore added to the set $\mathcal{K}_{\text{drift}}$. Lines~15--22 present the function \textit{Predict Cluster}. Lines~16--19: Compute the distances from the given prototype to all cluster centers for the current time step.  Line 21: Obtain the index of the cluster whose center is closest to the given prototype.

\begin{algorithm}[t]
\caption{Real Drift Local Detection (RDLD) Module }
\label{algo:Prototype based real drift detection method}
\begin{algorithmic}[1]
    \Require Client set $\mathcal{K} = \{1, \dots, K\}$, global models $\textbf{w}^{t-1} = [\boldsymbol{w}_{1}^{t-1},...,\boldsymbol{w}_{C^{t-1}}^{t-1}]$,  prototypes $\mathbf{P}_{k}^{t-1}$ and $\mathbf{P}_{k}^{t}$ of client $k$, and prototype cluster centers set $\mathcal{R}_{\text{centers}}^{t}$. 
    \Ensure Drift set $\mathcal{K}_{\text{drift}}$, clean set $\mathcal{K}_{\text{clean}}$
        \State Initialize $\mathcal{K}_{\text{drift}} \gets \varnothing$, $\mathcal{K}_{\text{clean}} \gets \varnothing$
        \State The server sends prototype cluster centers to each client $k$
        \For{each client $k \in \mathcal{K}$}
            \State Client $k$ obtains $\mathbf{P}_{k}^{t-1}$ of previous dataset $S_{k}^{t-1}$ using Eq.~\eqref{eq:prototype}
            \State Client $k$ obtains $\mathbf{P}_{k}^{t}$ of  current dataset $S_{k}^{t}$ using Eq.~\eqref{eq:prototype}
            \State $c_{k}^{t-1} \gets \text{Predict Cluster}(\mathbf{P}_{k}^{t-1}, \mathcal{R}_{\text{centers}}^{t})$
            \State $c_{k}^{t} \gets \text{Predict Cluster}(\mathbf{P}_{k}^{t}, \mathcal{R}_{\text{centers}}^{t})$
            \If{$c_{k}^{t-1} = c_{k}^{t}$}
                \State $\mathcal{K}_{\text{clean}} \gets \mathcal{K}_{\text{clean}} \cup \{k\}$ \Comment{Client $k$ shows no real drift}
            \Else
                \State $\mathcal{K}_{\text{drift}} \gets \mathcal{K}_{\text{drift}} \cup \{k\}$ \Comment{Client $k$ experiences real drift}
            \EndIf
        \EndFor
        \State \Return $\mathcal{K}_{\text{drift}}, \mathcal{K}_{\text{clean}}$
        
\Procedure{Predict Cluster}{$\mathbf{P}, \mathcal{R}_{\text{centers}}^{t}$}
    \State Initialize distance list $\mathcal{D}_{\text{distance}} \gets [\,\,]$ 
    \For{each cluster center $\mathbf{P}_{c}^{t} \in \mathcal{R}_{\text{centers}}^{t}$}
        \State Compute distance $d(\mathbf{P}, \mathbf{P}_{c}^{t})$ using Eq.~\eqref{eq:Euclidean distance}
        \State Append $d(\mathbf{P}, \mathbf{P}_{c}^{t})$ to $\mathcal{D}_{\text{distance}}$
    \EndFor
    \State $c \gets \arg\min \mathcal{D}_{\text{distance}}$ \Comment{Index of the closest cluster center}
    \State \Return $c$
\EndProcedure
    \end{algorithmic}
\end{algorithm}

\subsection{FedDAA} \label{sec:appendix Concept Drift Adaptation}
Algorithm~\ref{algo:Adapatation Algorithm} presents the pseudocode for FedDAA. 
Line~4: Determine the number of clusters using Algorithm~\ref{algo:K-means Based Determine Cluster Number Algorithm}. Line~5: Detects clients that have experienced real drift using Algorithm~\ref{algo:Prototype based real drift detection method}. Lines~10--14: If client \(k\) experiences real drift, it uses only the current dataset \(S_{k}^{t}\) for concept drift adaptation. Otherwise, client \(k\) uses both the current dataset \(S_{k}^{t}\) and the previous dataset \(S_{k}^{t - 1}\) for adaptation.  Lines~15--16: After local training, the local models and clustering weights is updated for all clients \(k\), whose optimization method can be found in~\cite{DBLP:conf/icml/GuoTL24}. Clients send their local models to server. Line~18: The global model parameters \(\boldsymbol{w}_{c}^{j + 1}\) are updated. Line~20: After \(\tau\) communication rounds, the model  \(\boldsymbol{w}_{c}^{t}\) at time step $t$ are set to \(\boldsymbol{w}_{c}^{\tau}\) for all \(c\). 

\begin{algorithm}[t]
    \caption{FL Distribution-Awared Adaptation (FedDAA) Algorithm}
    \label{algo:Adapatation Algorithm}
    \begin{algorithmic}[1]
    \State \textbf{Input:} Global models  $\mathbf{w}^{0} = [\boldsymbol{w}^{0}_1, \ldots, \boldsymbol{w}^{0}_{C^{0}}]$, global learning rate $\eta_g$, local learning rate $\eta_l$, and datasets $S_{k}^{t}$ for each client $k$.

    \State \textbf{Output:} New global models $\mathbf{w}^T = [\boldsymbol{w}^T_1, \ldots, \boldsymbol{w}^T_{C^{T}}]$ and weights $\hat{\alpha}^T_{k,c}$ for any $k\in \mathcal{K}, c \in \mathcal{C}^{T}$.
    \For{time step $t=1, \ldots, T$}

        \State Obtain the number of clusters $C^t$ using Algorithm~\ref{algo:K-means Based Determine Cluster Number Algorithm}
        \State Detect real drift for each client using Algorithm~\ref{algo:Prototype based real drift detection method}, and divide them into the clean set $\mathcal{K}_{\text{clean}}$ and the drift set $\mathcal{K}_{\text{drift}}$ accordingly.

        \For{communication round $j = 1, \ldots, \tau$}
            \State Server sends $\boldsymbol{w}_{c}^{j}$ to the chosen clients
            \For{client $k$ in parallel} \Comment{Concept drift adaptation.}
                \State Initialize local model $\boldsymbol{w}_{k,c}^{j} = \boldsymbol{w}_c^{j}$, $\forall c$
                \If{$k \in \mathcal{K}_{\text{drift}}$}
                \State  Client $k$ will use  $S_{k}^{t}$ to perform local update \Comment{ Client $k$ experiences real drift.}
                \Else
                \State  Client $k$ will use  both $S_{k}^{t}$ and $S_{k}^{t-1}$  to perform local update
                \EndIf
                \State Client $k$ updates $\hat{\alpha}_{k,c}^{j}$ and  $\boldsymbol{w}_{k,c}^{j}$ by Eq.~\eqref{eq:our alpha} and~\eqref{eq:our w}, $\forall c$
                \State Client $k$ sends $\boldsymbol{w}_{k,c}^{j}$, $\forall c$ to server
            \EndFor
            \State $\boldsymbol{w}_{c}^{j+1} \leftarrow \frac{\eta_g}{\sum_{k} N_k^{t}} \sum_{k} N_k^{t}  \boldsymbol{w}_{k,c}^{j}$, $\forall c$
        \EndFor
        \State $\boldsymbol{w}_{c}^{t}\leftarrow \boldsymbol{w}_{c}^{\tau}$, $\forall c \in \mathcal{C}^{t}$
        \State $\mathbf{w}^{t}\leftarrow [\boldsymbol{w}^t_{1}, \ldots, \boldsymbol{w}^t_{C^{t}}]$ 
    \EndFor
    \end{algorithmic}
\end{algorithm}

\section{Proofs of Convergence Analysis}\label{appendix: Proofs of convergence analysis}
\subsection{Approximated objective function for piratical implementation}\label{appendix: Proofs of objective funtion}
To optimize Eq.~\eqref{eq:objective funtion}, an approximated objective function is needed for practical implementation~\cite{DBLP:conf/icml/GuoTL24}. Note that $\mathcal{I}(\mathbf{x}^t, y^t; \boldsymbol{w}_c^t)$ cannot be directly evaluated in practice. To simplify the implementation,  $\mathcal{I}(\mathbf{x}^t, y^t; \boldsymbol{w}_c^t)$ is calculated by $\frac{\mathcal{P}(y^t|\mathbf{x}^t; \boldsymbol{w}_c^t)}{\mathcal{P}(y^t;\boldsymbol{w}_c^t)}$, and both $\mathcal{P}(y^t|\mathbf{x}^t; \boldsymbol{w}_c^t)$ and $\mathcal{P}(y^t;\boldsymbol{w}_c^t)$ need to be approximated. Therefore, $\widetilde{\mathcal{I}}(\mathbf{x}^t, y^t; \boldsymbol{w}_c^t)$ is introduced as an approximation of $\mathcal{I}(\mathbf{x}^t, y^t; \boldsymbol{w}_c^t)$, and the refined definition of Eq.\eqref{eq:objective funtion} is shown below:

\begin{equation}
\mathcal{L}(\mathbf{w}^t, \boldsymbol{\hat{\alpha}}^t) = \frac{1}{N} \sum_{k=1}^{K} \sum_{i=1}^{N_k^t} \ln \left( \sum_{c=1}^{C^t} \hat{\alpha}_{k,c}^t \widetilde{\mathcal{I}}(\mathbf{x}_{k,i}^t, y_{k,i}^t, \boldsymbol{w}_c^t) \right),
\label{eq:our approximated objective}
\end{equation}

s.t. $\sum_{c=1}^{C^t} \hat{\alpha}_{k,c}^t = 1, \forall k$,
where $\widetilde{\mathcal{I}}(\mathbf{x}^t, y^t; \boldsymbol{w}_c^t) = \frac{\exp(-f(\mathbf{x}^t, y^t, \boldsymbol{w}_c^t))}{A_{y,c}^t}$. Note that $f(\mathbf{x}, y, \boldsymbol{w}_k)$ is the loss function defined by $-\ln \mathcal{P}(y|\mathbf{x}; \boldsymbol{w}_c) + o$ for some constant $o$~\cite{DBLP:conf/nips/MarfoqNBKV21}. $A_{y,c}^t$ is the constant that used to approximate $\mathcal{P}(y^t; \boldsymbol{w}_c^t)$ in practice.

The intuition behind using $\widetilde{\mathcal{I}}(\mathbf{x}^t, y^t; \boldsymbol{w}_c^t)$ as an approximation comes from:

\begin{itemize}
    \item The fact of $f(\mathbf{x}^{t}, y^{t}, \boldsymbol{w}_c^{t}) \propto -\ln \mathcal{P}(y^{t}|\mathbf{x}^{t}; \boldsymbol{w}_c^{t})$. $\mathcal{P}(y^{t}|\mathbf{x}^{t}; \boldsymbol{w}_c^{t})$ can be approximated using $\exp(-f(\mathbf{x}^{t}, y^{t}, \boldsymbol{w}_c^{t}))$. 
    \item $A_{y,c}^{t}$ is calculated by
    $\frac{\frac{1}{N^{t}} \sum_{k=1}^{K} \sum_{i=1}^{N_k^{t}} \mathbf{1}_{\{y_{k,i}=y\}} \gamma_{k,i;c}^{t}}{\frac{1}{N^{t}} \sum_{k=1}^{K} \sum_{i=1}^{N_k^{t}} \gamma_{k,i;c}^{t}},
    $
    where $\gamma_{k,i;c}^{t}$ represents the weight of data $(\mathbf{x}_{k,i}^{t}, y_{k,i}^{t})$ assigned to $\boldsymbol{w}_c^{t}$. Thus, $A_{y,c}^{t}$ corresponds to the proportion of data pairs labeled as $y^{t}$ that choose model $\boldsymbol{w}_c^{t}$, and can be used to approximate $\mathcal{P}(y^{t}; \boldsymbol{w}_c^{t})$.
\end{itemize}

\begin{lemma}[Optimization for Eq.~\eqref{eq:our approximated objective}, Theorem A.1 in ~\cite{DBLP:conf/icml/GuoTL24}]\label{lemma:our optimization of objective function} Let $j \in \{0, 1, 2, \ldots, \tau-1\}$ denote the $j$-th communication round at time step $t$.
When maximizing Eq.~\eqref{eq:our approximated objective}, the optimization steps are given by,

Optimizing $\boldsymbol{\hat{\alpha}}$:
\begin{align}
\gamma_{k,i;c}^{j} &= \frac{\hat{\alpha}_{k,c}^{j-1} \widetilde{\mathcal{I}}(\mathbf{x}_{k,i}^{t}, y_{k,i}^{t}, \boldsymbol{w}_c)}{\sum_{k=1}^K \hat{\alpha}_{k,c}^{j-1} \widetilde{\mathcal{I}}(\mathbf{x}_{k,i}^{t}, y_{k,i}^{t}, \boldsymbol{w}_c)},\label{eq:our gamma} \\
\hat{\alpha}_{k,c}^{j} &= \frac{1}{N_k^{t}} \sum_{i=1}^{N_k^{t}} \gamma_{k,i;c}^{j}.\label{eq:our alpha}
\end{align}

Optimizing $\mathbf{w}$:
\begin{align}
\boldsymbol{w}_c^{j} = \boldsymbol{w}_c^{j-1} - \frac{\eta}{N^{t}} \sum_{k=1}^K \sum_{i=1}^{N_k^{t}} \gamma_{k,i;c}^{j} 
\nabla_{\boldsymbol{w}_c^{j-1}} f(\mathbf{x}_{k,i}^{t}, y_{k,i}^{t}, \boldsymbol{w}_c^{j-1}), \label{eq:our w}
\end{align}
where $\eta$ is learning rate.
\end{lemma}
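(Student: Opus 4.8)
The plan is to recognize the approximated objective in Eq.~\eqref{eq:our approximated objective} as the log-likelihood of a per-sample mixture model, with $\hat{\alpha}_{k,c}^{t}$ acting as mixing weights over the $C^{t}$ clusters and $\widetilde{\mathcal{I}}(\mathbf{x}_{k,i}^{t}, y_{k,i}^{t}, \boldsymbol{w}_c^{t})$ acting as the (unnormalized) component likelihoods. Maximizing such a log-sum objective is naturally handled by an Expectation-Maximization scheme (equivalently, a majorization-maximization argument). First I would construct a variational lower bound: for each sample $(k,i)$ introduce an auxiliary distribution $\gamma_{k,i;\cdot}^{j}$ over clusters with $\sum_{c}\gamma_{k,i;c}^{j}=1$, and apply Jensen's inequality to the concave logarithm, writing $\widetilde{\mathcal{I}}_{k,i,c}$ for $\widetilde{\mathcal{I}}(\mathbf{x}_{k,i}^{t}, y_{k,i}^{t}, \boldsymbol{w}_c^{t})$:
\begin{equation}
\ln\!\left(\sum_{c=1}^{C^{t}} \hat{\alpha}_{k,c}^{t}\,\widetilde{\mathcal{I}}_{k,i,c}\right) \ge \sum_{c=1}^{C^{t}} \gamma_{k,i;c}^{j} \ln\!\left(\frac{\hat{\alpha}_{k,c}^{t}\,\widetilde{\mathcal{I}}_{k,i,c}}{\gamma_{k,i;c}^{j}}\right).
\end{equation}
Summing over $k,i$ and dividing by $N^{t}$ gives a surrogate $\mathcal{Q}(\mathbf{w}, \boldsymbol{\hat{\alpha}}, \gamma)$ that lower-bounds $\mathcal{L}$, and alternately maximizing $\mathcal{Q}$ over $\gamma$, $\boldsymbol{\hat{\alpha}}$, and $\mathbf{w}$ will drive $\mathcal{L}$ upward.

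For the E-step, I would maximize $\mathcal{Q}$ over each $\gamma_{k,i;\cdot}^{j}$ subject to its simplex constraint. Equality in Jensen's inequality holds precisely when $\gamma_{k,i;c}^{j}$ is proportional to $\hat{\alpha}_{k,c}^{j-1}\widetilde{\mathcal{I}}_{k,i,c}$, so normalizing this responsibility recovers Eq.~\eqref{eq:our gamma}. For the $\boldsymbol{\hat{\alpha}}$-update, I would hold $\gamma$ fixed, isolate the $\hat{\alpha}$-dependent part $\sum_{k,i,c}\gamma_{k,i;c}^{j}\ln\hat{\alpha}_{k,c}$, and maximize over each client's simplex $\sum_{c}\hat{\alpha}_{k,c}=1$ with a Lagrange multiplier. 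The stationarity condition yields $\hat{\alpha}_{k,c}\propto\sum_{i}\gamma_{k,i;c}^{j}$, and imposing normalization (using $\sum_{c}\sum_{i}\gamma_{k,i;c}^{j}=N_k^{t}$) gives exactly Eq.~\eqref{eq:our alpha}.

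For the $\mathbf{w}$-update, I would substitute $\widetilde{\mathcal{I}}_{k,i,c}=\exp(-f(\mathbf{x}_{k,i}^{t},y_{k,i}^{t},\boldsymbol{w}_c^{t}))/A_{y,c}^{t}$, so that $\ln\widetilde{\mathcal{I}}_{k,i,c}=-f(\mathbf{x}_{k,i}^{t},y_{k,i}^{t},\boldsymbol{w}_c^{t})-\ln A_{y,c}^{t}$, making the $\boldsymbol{w}_c$-dependent part of $\mathcal{Q}$ equal to $-\frac{1}{N^{t}}\sum_{k,i}\gamma_{k,i;c}^{j}f(\mathbf{x}_{k,i}^{t},y_{k,i}^{t},\boldsymbol{w}_c)$ up to additive constants. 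Its gradient with respect to $\boldsymbol{w}_c$ is $-\frac{1}{N^{t}}\sum_{k,i}\gamma_{k,i;c}^{j}\nabla_{\boldsymbol{w}_c}f$; since $f$ is non-convex (a neural network), no closed-form maximizer exists, so I would take a single gradient-ascent step on $\mathcal{Q}$ (equivalently, gradient descent on the responsibility-weighted loss), producing exactly Eq.~\eqref{eq:our w}. The lower-bound construction then guarantees that the alternating updates do not decrease $\mathcal{L}$.

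The main obstacle I anticipate is the $\mathbf{w}$-step: because the component $\widetilde{\mathcal{I}}$ is parameterized by a deep model through $f$, the M-step cannot be solved in closed form and must be replaced by an inexact gradient step, so the argument has to be framed as a generalized EM/MM procedure rather than exact EM. One must verify that this gradient step still increases (or at least does not decrease) the surrogate $\mathcal{Q}$, which relies on the smoothness of Assumption~\ref{assumption: our 1} together with the learning-rate bound. A secondary subtlety is the normalizing constant $A_{y,c}^{t}$: it itself depends on the current responsibilities, so strictly one fixes it at the previous iterate and treats it as constant within each M-step to keep the per-round subproblems well posed, consistent with the approximation described after Eq.~\eqref{eq:our approximated objective}.
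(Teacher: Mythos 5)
Your proposal is correct and takes essentially the same approach as the paper: the paper's entire proof is the one-line remark that the lemma ``can be proven easily using [the] Lagrange multiplier method'' (deferring to Theorem A.1 of FedRC), and your EM/majorization derivation --- Jensen's lower bound for the E-step, per-client simplex Lagrange multipliers for the $\hat{\alpha}$-step, and a single gradient step as an inexact M-step for $\mathbf{w}$ --- is precisely that standard argument, with the generalized-EM and fixed-$A_{y,c}^{t}$ caveats correctly flagged. One small point in your favor: your normalization of $\gamma_{k,i;c}^{j}$ over clusters is the intended one, since the denominator $\sum_{k=1}^{K}$ printed in the lemma is evidently a typo for $\sum_{n=1}^{C}$, as the restatement of this update inside the paper's proof of Theorem 1 confirms.
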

Lemma~\ref{lemma:our optimization of objective function} can be proven easily using lagrange multiplier method.

\subsection{Convergence Analysis for Centralized Clustering Method}
In this section, we present the convergence analysis of the centralized version of FedDAA. At each time step $t$, $\tau$ communication rounds are used to retrain the models and adapt them to concept drift. In the following analysis, we focus on the training process within a single time step and omit the superscript $t$ for notational simplicity.
\begin{lemma}[Convergence Analysis of Centralized Clustering Method]
    Assume $ f $ satisfy Assumption~\ref{assumption: our 1}-~\ref{assumption: our 2}, setting $ \tau $ as the number of communication rounds for time step $t$, $ \eta = \frac{8}{40L + 9\sigma^2}$ , we have,
    \begin{equation}
        \begin{aligned}
        &\mathbb{E}_{C}[\mathcal{L}(\mathbf{w}^{\tau}, \boldsymbol{\hat{\alpha}}^{\tau},C^{*}) - \mathcal{L}(\mathbf{w}^{\tau}, \boldsymbol{\hat{\alpha}}^{\tau },C)]\leq \mathcal{L}(\mathbf{w}^{*}, \boldsymbol{\hat{\alpha}}^{*},C^{*}) \\
        &- \mathcal{L}(\mathbf{w}^{0}, \boldsymbol{\hat{\alpha}}^{0},C^{*})-\frac{40L+9\sigma^{2}}{16}\mathbb{E}_{C}[\sum_{c=1}^{C}\|\boldsymbol{w}^{*}_{c}-\boldsymbol{w}^{0}_{c}\|^2],
        \end{aligned}
    \end{equation}
where $C^{*}$ represents the optimal number of clusters.
\label{theorem:our centralized proof}
\end{lemma}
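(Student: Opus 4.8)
The plan is to treat the per-time-step optimizer as an EM-style maximization of the approximated objective $\mathcal{L}$ in Eq.~\eqref{eq:our approximated objective}, and to bound its one-round ascent through the smoothness assumption. First I would recall from Lemma~\ref{lemma:our optimization of objective function} that each communication round performs an E-step setting the responsibilities $\gamma_{k,i;c}^{j}$ as in Eq.~\eqref{eq:our gamma}, followed by an M-step that updates $\hat\alpha$ via Eq.~\eqref{eq:our alpha} and takes a gradient-ascent step on $\mathbf{w}$ via Eq.~\eqref{eq:our w}. Using concavity of $\ln(\cdot)$, I would introduce the evidence-lower-bound surrogate $Q(\mathbf{w},\hat\alpha\mid\gamma)=\frac{1}{N}\sum_{k,i}\sum_{c}\gamma_{k,i;c}\ln\frac{\hat\alpha_{k,c}\widetilde{\mathcal{I}}}{\gamma_{k,i;c}}$, which lower-bounds $\mathcal{L}$ and is tight precisely at the responsibilities produced by the E-step. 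This reduces the analysis of $\mathcal{L}$ to tracking the monotone improvement of $Q$.

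Second, for a fixed number of clusters $C$ I would derive a per-round ascent inequality. Because the M-step is gradient ascent on $Q$ and $f$ is $L$-smooth (Assumption~\ref{assumption: our 1}), the descent lemma in its ascent form yields $\mathcal{L}(\mathbf{w}^{j},\hat\alpha^{j})\ge \mathcal{L}(\mathbf{w}^{j-1},\hat\alpha^{j-1})+(\tfrac{1}{\eta}-\tfrac{L}{2})\sum_{c}\|\boldsymbol{w}_c^{j}-\boldsymbol{w}_c^{j-1}\|^2$, where the E-step tightness is exactly what upgrades the surrogate improvement into a genuine increase of $\mathcal{L}$. To surface the distance-to-reference term that appears on the right-hand side of the claim, I would pair this with the three-point identity $2\eta\langle g_c^{j},\boldsymbol{w}_c^{0}-\boldsymbol{w}_c^{j-1}\rangle=\|\boldsymbol{w}_c^{j-1}-\boldsymbol{w}_c^{0}\|^2-\|\boldsymbol{w}_c^{j}-\boldsymbol{w}_c^{0}\|^2+\eta^2\|g_c^{j}\|^2$, so that summation telescopes into a single $\tfrac{1}{2\eta}\sum_{c}\|\boldsymbol{w}_c^{\bullet}-\boldsymbol{w}_c^{0}\|^2$ contribution.

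Third, I would telescope the per-round inequality over $j=1,\dots,\tau$, upper-bound the terminal value by the optimality of $(\mathbf{w}^{*},\hat\alpha^{*})$ at $C^{*}$, i.e. $\mathcal{L}(\mathbf{w}^{\tau},\hat\alpha^{\tau},C^{*})\le \mathcal{L}(\mathbf{w}^{*},\hat\alpha^{*},C^{*})$, and control the residual gradient-norm sums by the bound $\sigma^2$ from Assumption~\ref{assumption: our 2}. Substituting the stated step size $\eta=\frac{8}{40L+9\sigma^2}$ makes $\frac{1}{2\eta}=\frac{40L+9\sigma^2}{16}$, which is exactly the coefficient multiplying $\sum_{c=1}^{C}\|\boldsymbol{w}_c^{*}-\boldsymbol{w}_c^{0}\|^2$, and collapses the smoothness and variance constants into the quoted form.

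Finally, I would take the expectation $\mathbb{E}_{C}$ over the random cluster count produced by the NCD module and invoke $\mathbb{E}C=C^{*}$ (Assumption~\ref{assumption: our 3}) to align the comparison between the realized $C$ and the optimal $C^{*}$, giving the claimed bound. The hard part will be making the telescoping coherent across the \emph{random} cluster count: the objective $\mathcal{L}(\cdot,C)$ and the parameter vector $\mathbf{w}=[\boldsymbol{w}_1,\dots,\boldsymbol{w}_C]$ live in a $C$-dependent space, so the sum $\sum_{c=1}^{C}$ inside the distance term and the per-cluster gradient bounds must be handled uniformly in $C$ before the expectation is taken, while the non-convexity of $f$ means the optimality step only controls the surrogate rather than $\mathcal{L}$ globally. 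Keeping these two sources of looseness---cluster-count randomness and non-convexity---separate is the crux of the argument.
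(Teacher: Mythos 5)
Your overall skeleton matches the paper's: a per-round ascent inequality for $\mathcal{L}$ coming from the EM/surrogate structure (the paper imports this directly as Theorem~4.3 of FedRC rather than re-deriving the ELBO argument), telescoping over $j=0,\dots,\tau-1$, bounding the terminal value by $\mathcal{L}(\mathbf{w}^{\tau},\boldsymbol{\hat{\alpha}}^{\tau},C^{*})\le\mathcal{L}(\mathbf{w}^{*},\boldsymbol{\hat{\alpha}}^{*},C^{*})$, and finally invoking Assumption~\ref{assumption: our 3} so that $\mathbb{E}_{C}[\mathcal{L}(\mathbf{w}^{0},\boldsymbol{\hat{\alpha}}^{0},C)]=\mathcal{L}(\mathbf{w}^{0},\boldsymbol{\hat{\alpha}}^{0},C^{*})$. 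Those parts are correct and coincide with the paper.

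The step that does not close as written is the one that manufactures the $\sum_{c}\|\boldsymbol{w}^{*}_{c}-\boldsymbol{w}^{0}_{c}\|^2$ term. Your three-point identity telescopes to $\|\boldsymbol{w}_c^{\tau}-\boldsymbol{w}_c^{0}\|^2$ and leaves behind the inner products $\langle g_c^{j},\boldsymbol{w}_c^{0}-\boldsymbol{w}_c^{j-1}\rangle$, which are not sign-definite without convexity of $\mathcal{L}$ in $\mathbf{w}$, so you have no mechanism to absorb them; and even if you could, you would obtain the distance to the last iterate $\boldsymbol{w}^{\tau}$, not to the optimum $\boldsymbol{w}^{*}$ that appears in the statement. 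The paper's proof sidesteps this by simply positing $\|\boldsymbol{w}_c^{*}-\boldsymbol{w}_c^{0}\|^2\approx\|\sum_{j=0}^{\tau-1}\eta\nabla_{\boldsymbol{w}_c}\mathcal{L}(\mathbf{w}^{j},\boldsymbol{\hat{\alpha}}^{j},C)\|^2$ (identifying the total displacement of the iterates with the displacement to $\boldsymbol{w}^{*}$) and then bounding the squared sum by the sum of squares---an approximation rather than a derivation---so your instinct that this is the crux is right, but the route you chose does not repair it. Two smaller points: the final bound contains no additive variance term, so there is no ``residual gradient-norm sum controlled by $\sigma^2$'' to absorb---$\sigma^2$ enters only through the prescribed step size $\eta=\frac{8}{40L+9\sigma^2}$, whence $\frac{1}{2\eta}=\frac{40L+9\sigma^{2}}{16}$; and the expectation $\mathbb{E}_{C}$ is applied only to the initial-value and distance terms, with no uniformity-in-$C$ issue beyond what Assumption~\ref{assumption: our 3} already supplies.
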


\begin{proof}
Based on the theoretical framework established in Theorem~$4.3$ of FedRC~\cite{DBLP:conf/icml/GuoTL24}, combined with the foundational assumptions \ref{assumption: our 1}--\ref{assumption: our 2}, for a specific cluster numebr $C$ we can get the following inequality:

    \begin{equation}
        \begin{aligned}
            &\mathcal{L}(\mathbf{w}^{\tau }, \boldsymbol{\hat{\alpha}}^{\tau },C) - \mathcal{L}(\mathbf{w}^{0}, \boldsymbol{\hat{\alpha}}^{0},C)\\
            &= \sum_{j=0}^{\tau -1} \mathcal{L}(\mathbf{w}^{j+1}, \boldsymbol{\hat{\alpha}}^{j+1},C) - \mathcal{L}(\mathbf{w}^{j}, \boldsymbol{\hat{\alpha}}^{j},C)\\
            &\geq \frac{\eta}{2}\sum_{j=0}^{\tau -1}\sum_{c=1}^{C}\|\nabla_{\boldsymbol{w}_{c}}\mathcal{L}(\mathbf{w}^{j}, \boldsymbol{\hat{\alpha}}^{j},C)\|^{2}\\
            &= \frac{\eta}{2}\sum_{c=1}^{C}\sum_{j=0}^{\tau -1}\|\nabla_{\boldsymbol{w}_{c}}\mathcal{L}(\mathbf{w}^{j},\boldsymbol{\hat{\alpha}}^{j},C)\|^{2},
        \end{aligned}
    \end{equation}
    where $ \eta = \frac{8}{40L + 9\sigma^2}$, $j \in \{0,1, 2, \ldots, \tau-1\}$ represents the $j$-th communication round at time step $t$.
    
    Because
    \begin{equation}
        \begin{aligned}
        \|\boldsymbol{w}_{c}^{*}-\boldsymbol{w}_{c}^{0}\|^{2} &\approx \|\sum_{j=0}^{\tau-1}\eta\nabla_{\boldsymbol{w}_{c}}\mathcal{L}(\mathbf{w}^{j}, \boldsymbol{\hat{\alpha}}^{j},C)\|^{2}\\
        &\leq \sum_{j=0}^{\tau-1}\eta^{2}\|\nabla_{\boldsymbol{w}_{c}}\mathcal{L}(\mathbf{w}^{j}, \boldsymbol{\hat{\alpha}}^{j},C)\|^{2},
        \end{aligned}
    \end{equation}
    we have
        \begin{equation}
        \begin{aligned}
            \mathcal{L}(\mathbf{w}^{\tau}, \boldsymbol{\hat{\alpha}}^{\tau},C) - \mathcal{L}(\mathbf{w}^{0}, \boldsymbol{\hat{\alpha}}^{0},C) \geq \frac{1}{2\eta}\sum_{c=1}^{C}\|\boldsymbol{w}_{c}^{*}-\boldsymbol{w}_{c}^{0}\|^{2}.
        \end{aligned}
    \end{equation}
    Then we can derive
        \begin{align}
            &\mathbb{E}_{C}[\mathcal{L}(\mathbf{w}^{\tau}, \boldsymbol{\hat{\alpha}}^{\tau},C^{*}) - \mathcal{L}(\mathbf{w}^{\tau}, \boldsymbol{\hat{\alpha}}^{\tau},C)]\notag\\
            &\leq \mathbb{E}_{C}[\mathcal{L}(\mathbf{w}^{\tau}, \boldsymbol{\hat{\alpha}}^{\tau},C^{*}) - \mathcal{L}(\mathbf{w}^{0}, \boldsymbol{\hat{\alpha}}^{0},C)-\frac{1}{2\eta}\sum_{c=1}^{C}\|\boldsymbol{w}_{c}^{*}-\boldsymbol{w}_{c}^{0}\|^{2}]\\
            &\leq \mathcal{L}(\mathbf{w}^{*}, \boldsymbol{\hat{\alpha}}^{*},C^{*}) - \mathcal{L}(\mathbf{w}^{0}, \boldsymbol{\hat{\alpha}}^{0},C^{*})-\frac{1}{2\eta}\mathbb{E}_{C}[\sum_{c=1}^{C}\|\boldsymbol{w}_{c}^{*}-\boldsymbol{w}_{c}^{0}\|^{2}]\label{eq:our centralized proof ineq}.
        \end{align}

    The inequality~\eqref{eq:our centralized proof ineq} holds due to the following two reasons: (i) $\mathbb{E}_{C}[\mathcal{L}(\mathbf{w}^{\tau}, \boldsymbol{\hat{\alpha}}^{\tau},C^{*})]=\mathcal{L}(\mathbf{w}^{\tau}, \boldsymbol{\hat{\alpha}}^{\tau},C^{*})\leq \mathcal{L}(\mathbf{w}^{*}, \boldsymbol{\hat{\alpha}}^{*},C^{*})$; (ii) Under Assumption~\ref{assumption: our 3}, we obtain $\mathbb{E}_{C}[\mathcal{L}(\mathbf{w}^{0}, \boldsymbol{\hat{\alpha}}^{0},C)]=\mathcal{L}(\mathbf{w}^{0}, \boldsymbol{\hat{\alpha}}^{0},C^{*})$.
    
\end{proof}



\begin{corollary}\label{corollary: our centralized proof}
Under Assumption~\ref{assumption: our 1},\ref{assumption: our 2} and \ref{assumption: our 3}, let $ \tau $ as the number of communication rounds, $ \eta = \frac{8}{40L + 9\sigma^2} $, $\Delta_{c}^{2} = \mathbb{E}[\|\boldsymbol{w}_{c}^{*}-\boldsymbol{w}_{c}^{0}\|^{2}] $ for all $c$, and $\Delta^2 = \max \{\Delta_{c}^{2},c=1,2,3,...,C.\}$. we have,
\begin{equation}
    \begin{aligned}
    &\mathbb{E}_{C}[\mathcal{L}(\mathbf{w}^{\tau}, \boldsymbol{\hat{\alpha}}^{\tau},C^{*}) - \mathcal{L}(\mathbf{w}^{\tau}, \boldsymbol{\hat{\alpha}}^{\tau},C)]\\
    &\leq \mathcal{L}(\mathbf{w}^{*}, \boldsymbol{\hat{\alpha}}^{*},C^{*}) - \mathcal{L}(\mathbf{w}^{0}, \boldsymbol{\hat{\alpha}}^{0},C^{*})-\frac{40L+9\sigma^{2}}{16}C^{*}\Delta^2.
    \end{aligned}
\end{equation}
\end{corollary}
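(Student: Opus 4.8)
The plan is to derive Corollary~\ref{corollary: our centralized proof} directly from Lemma~\ref{theorem:our centralized proof}; the two statements are identical except that the last term $-\frac{40L+9\sigma^2}{16}\mathbb{E}_C[\sum_{c=1}^C \|\boldsymbol{w}_c^*-\boldsymbol{w}_c^0\|^2]$ of the lemma is replaced by the closed form $-\frac{40L+9\sigma^2}{16}C^*\Delta^2$. So the entire task reduces to controlling the random-length sum $\mathbb{E}_C[\sum_{c=1}^C \|\boldsymbol{w}_c^*-\boldsymbol{w}_c^0\|^2]$ and rewriting it in terms of $C^*$ and $\Delta^2$. No new inequality about the optimization trajectory is needed, since Lemma~\ref{theorem:our centralized proof} already supplies the descent bound under the prescribed step size $\eta=\frac{8}{40L+9\sigma^2}$.

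First I would take the inner expectation that defines the per-cluster gap: for each fixed $c$, write $\mathbb{E}[\|\boldsymbol{w}_c^*-\boldsymbol{w}_c^0\|^2]=\Delta_c^2$, which is exactly the quantity introduced in the corollary. Next I would invoke $\Delta_c^2\le\Delta^2$ for every $c$, which holds by the definition $\Delta^2=\max_c\Delta_c^2$. Because $\Delta^2$ is a single scalar independent of the summation index $c$, the sum over the (random) number of clusters collapses to $\sum_{c=1}^C \Delta_c^2 \le C\Delta^2$, i.e. the per-cluster distances are all absorbed into the uniform surrogate $\Delta^2$ and only the count $C$ remains.

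The final step removes the dependence on the random cluster count $C$. Here I would use Assumption~\ref{assumption: our 3}: since $\Delta^2$ is constant with respect to $C$, linearity of expectation gives $\mathbb{E}_C[C\Delta^2]=\Delta^2\,\mathbb{E}_C[C]=\Delta^2 C^*$. Substituting this evaluation of the expectation term back into the bound of Lemma~\ref{theorem:our centralized proof} converts the awkward expectation into $-\frac{40L+9\sigma^2}{16}C^*\Delta^2$ and produces exactly the stated inequality.

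The main point to handle carefully is the interaction of the two layers of randomness — the inner expectation defining each $\Delta_c^2$ and the outer expectation $\mathbb{E}_C$ over the cluster count — together with the sign. Because the term carries the negative coefficient $-\frac{40L+9\sigma^2}{16}$, one must track the direction of the inequality when passing from the exact sum to $C\Delta^2$, and justify that $\Delta^2$ may be pulled outside $\mathbb{E}_C$ as a constant before applying $\mathbb{E}_C[C]=C^*$. Everything else is substitution, so beyond Lemma~\ref{theorem:our centralized proof} and Assumption~\ref{assumption: our 3} no further estimates are required.
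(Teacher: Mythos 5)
Your route is exactly the paper's: start from Lemma~\ref{theorem:our centralized proof}, isolate the term $\mathbb{E}_{C}\bigl[\sum_{c=1}^{C}\|\boldsymbol{w}_{c}^{*}-\boldsymbol{w}_{c}^{0}\|^{2}\bigr]$, use the tower property to replace each summand by $\Delta_{c}^{2}$, absorb these into the scalar $\Delta^{2}$ so that only the random count $C$ survives, and then apply Assumption~\ref{assumption: our 3} via $\mathbb{E}_{C}[C]\Delta^{2}=C^{*}\Delta^{2}$. That is precisely the computation in the paper's proof of this corollary.

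However, the step you flag as ``the main point to handle carefully'' is a genuine gap, and actually carrying out the sign-tracking you defer shows the chain fails as written. With $\Delta^{2}=\max_{c}\Delta_{c}^{2}$ you obtain $\mathbb{E}_{C}\bigl[\sum_{c=1}^{C}\Delta_{c}^{2}\bigr]\le C^{*}\Delta^{2}$, an \emph{upper} bound on the expectation; but this quantity enters the bound of Lemma~\ref{theorem:our centralized proof} with the negative coefficient $-\frac{40L+9\sigma^{2}}{16}$, so multiplying reverses the direction:
\begin{equation}
-\frac{40L+9\sigma^{2}}{16}\,\mathbb{E}_{C}\Bigl[\sum_{c=1}^{C}\Delta_{c}^{2}\Bigr]\;\ge\;-\frac{40L+9\sigma^{2}}{16}\,C^{*}\Delta^{2}.
\end{equation}
Substituting $C^{*}\Delta^{2}$ for the expectation therefore makes the lemma's right-hand side \emph{smaller}, and the corollary's inequality does not follow from it. The substitution would be legitimate only if $\mathbb{E}_{C}\bigl[\sum_{c}\Delta_{c}^{2}\bigr]\ge C^{*}\Delta^{2}$, which holds when $\Delta^{2}$ is defined as $\min_{c}\Delta_{c}^{2}$ or when all $\Delta_{c}^{2}$ coincide. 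To be fair, the paper's own proof contains the same unrepaired issue in disguise: it writes the per-cluster expectation as exactly $\Delta^{2}$ for every $c$ (an equality, implicitly assuming uniform per-cluster gaps), after which its final ``$\le C^{*}\Delta^{2}$'' is in fact an equality by Assumption~\ref{assumption: our 3}. So your proposal faithfully reproduces the paper's argument, including its flaw; a correct version of either proof must replace the max by a min or add the uniformity assumption.
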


\begin{proof}
    Applying the law of total expectation(also known as the tower rule), we obtain: $\mathbb{E}_{C}[\sum_{c=1}^{C}\|\boldsymbol{w}^{\tau}_{c}-\boldsymbol{w}^{0}_{c}\|^2]=\mathbb{E}_C[\mathbb{E}_{\boldsymbol{w}}[\sum_{c=1}^{C}\|\mathbf{w}^{\tau}_{c}-\boldsymbol{w}^{0}_{c}\|^2|C]]$.
    Applying Assumption~\ref{assumption: our 3}, we have
    \begin{equation}
        \begin{aligned}
            \mathbb{E}_{C}[\sum_{c=1}^{C}\|\boldsymbol{w}^{\tau}_{c}-\boldsymbol{w}^{0}_{c}\|^2]&=\mathbb{E}_{C}[\mathbb{E}_{\boldsymbol{w}}[\sum_{c=1}^{C}\|\boldsymbol{w}^{\tau}_{c}-\boldsymbol{w}^{0}_{c}\|^2|C]]\\
            &=\mathbb{E}_{C}[\sum_{c=1}^{C}\mathbb{E}_{\boldsymbol{w}}[\|\boldsymbol{w}^{\tau}_{c}-\boldsymbol{w}^{0}_{c}\|^2]]\\
            &=\mathbb{E}_{C}[\sum_{c=1}^{C}\Delta^2]\\
            &=\Delta^2\mathbb{E}{C}\\
            &\leq C^{*}\Delta^2
        \end{aligned}
    \end{equation}
    Finally, applying Lemma~\ref{theorem:our centralized proof}, we can complete the proof.
\end{proof}

\subsection{Convergence Analysis for Federated Clustering Method}\label{sec:appendix Convergence Analysis for Federated Clustering Method}


\setcounter{theorem}{0} 
\begin{theorem} [Convergence Analysis of FedDAA]
     Under Assumptions~\ref{assumption: our 1}, \ref{assumption: our 2}, \ref{assumption: our 5}, \ref{assumption: our 6} and \ref{assumption: our 3}, clients use SGD as local solver with learning rate $\eta \leq \frac{1}{\sqrt{\tau}}$, and run our method for $\tau$ communication rounds and $E$ local epochs for each local update. Let $ \Delta_{\xi,c}^{2} = \mathbb{E}_{\xi}\|\boldsymbol{w}_{c}^{\tau}-\boldsymbol{w}_{c}^{0}\|^{2}$ for all $c$, and $\Delta_{\xi}^2 = \max \{\Delta_{\xi,c}^{2},c=1,2,3,...,C.\}$. We have
\begin{equation}
    \mathbb{E}_{C}\mathbb{E}_{\xi}[\mathcal{L}(\mathbf{w}^{\tau}, \boldsymbol{\hat{\alpha}}^{\tau},C^{*}) - \mathcal{L}(\mathbf{w}^{\tau}, \boldsymbol{\hat{\alpha}}^{\tau},C)] \leq \mathcal{L}(\mathbf{w}^{*}, \boldsymbol{\hat{\alpha}}^{*},C^{*}) -\mathcal{L}(\mathbf{w}^{0}, \boldsymbol{\hat{\alpha}}^{0},C^{*})+B,
\end{equation}
where 
\begin{equation}
    B= -\frac{EC^{*}\Delta_{\xi}^{2}}{4\eta}+ 2(E\eta^2L(\eta L+1)\sigma^2+2E^{3}\eta^{3}L^{2}G^{2}).
\end{equation}
\end{theorem}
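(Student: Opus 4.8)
The plan is to mirror the centralized argument of Lemma~\ref{theorem:our centralized proof} and Corollary~\ref{corollary: our centralized proof}, but to replace the exact full-gradient ascent step by the federated SGD-with-local-epochs update from Lemma~\ref{lemma:our optimization of objective function}, thereby isolating the extra error that the stochastic, multi-epoch, multi-client dynamics introduce. Since $\mathcal{L}$ is being maximized, I would first use the $L$-smoothness of each $f$ (Assumption~\ref{assumption: our 1}) to derive a per-round ascent inequality of the form $\mathcal{L}(\mathbf{w}^{j+1},\boldsymbol{\hat{\alpha}}^{j+1},C)-\mathcal{L}(\mathbf{w}^{j},\boldsymbol{\hat{\alpha}}^{j},C)\geq \tfrac{\eta}{2}\sum_{c}\|\nabla_{\boldsymbol{w}_c}\mathcal{L}(\mathbf{w}^{j},\boldsymbol{\hat{\alpha}}^{j},C)\|^2-\text{error}_j$, where $\text{error}_j$ collects the mismatch between the $E$ aggregated local SGD steps actually performed and a single clean gradient step evaluated at the current global iterate (the constants being re-derived for the stochastic setting, hence the $\tfrac{1}{4\eta}$-type coefficient rather than the $\tfrac{1}{2\eta}$ of the centralized lemma).

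Next I would bound $\text{error}_j$ in two pieces. The first is the \emph{stochastic noise}: taking the expectation $\mathbb{E}_\xi$ and using the unbiasedness of the stochastic gradients (Assumption~\ref{assumption: our 5}) together with the bounded second moment $\sigma^2$ (Assumption~\ref{assumption: our 2}) produces a contribution of order $E\eta^2 L(\eta L+1)\sigma^2$. The second is the \emph{client drift} caused by running $E$ local epochs before aggregation: I would unroll the local iterates, bound how far each local model travels away from the shared global model by repeatedly applying smoothness, and then invoke the bounded-dissimilarity assumption (Assumption~\ref{assumption: our 6}) to convert the accumulated heterogeneity into the $E^3\eta^3 L^2 G^2$ term. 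Summing $\text{error}_j$ over the $\tau$ communication rounds and using $\eta\leq 1/\sqrt{\tau}$ to absorb the round count yields the additive constant $2\bigl(E\eta^2 L(\eta L+1)\sigma^2+2E^3\eta^3 L^2 G^2\bigr)$.

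With the error controlled, I would reproduce the displacement step of the centralized proof: telescoping the ascent inequality over $j=0,\dots,\tau-1$ and approximating $\|\boldsymbol{w}_c^{\tau}-\boldsymbol{w}_c^0\|^2$ by $\eta^2$ times the squared gradient sum converts the accumulated gradient norms into $\tfrac{E}{4\eta}\sum_c\|\boldsymbol{w}_c^{\tau}-\boldsymbol{w}_c^0\|^2$, the factor $E$ entering because each round comprises $E$ local steps. Finally I would take the outer expectation $\mathbb{E}_C$ exactly as in Corollary~\ref{corollary: our centralized proof}: the tower rule together with $\mathbb{E}C=C^*$ (Assumption~\ref{assumption: our 3}) replaces $\mathbb{E}_C[\sum_{c=1}^{C}\Delta_{\xi,c}^{2}]$ by $C^*\Delta_{\xi}^{2}$, while the consistency facts $\mathbb{E}_C[\mathcal{L}(\mathbf{w}^\tau,\boldsymbol{\hat{\alpha}}^\tau,C^*)]\le \mathcal{L}(\mathbf{w}^*,\boldsymbol{\hat{\alpha}}^*,C^*)$ and $\mathbb{E}_C[\mathcal{L}(\mathbf{w}^0,\boldsymbol{\hat{\alpha}}^0,C)]=\mathcal{L}(\mathbf{w}^0,\boldsymbol{\hat{\alpha}}^0,C^*)$ deliver the stated right-hand side with $B=-\tfrac{EC^*\Delta_{\xi}^{2}}{4\eta}+2\bigl(E\eta^2 L(\eta L+1)\sigma^2+2E^3\eta^3 L^2 G^2\bigr)$.

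The main obstacle I anticipate is the client-drift analysis: cleanly bounding how far the $E$-step local trajectories wander from the current global model, and then turning that bound into the $E^3\eta^3 L^2 G^2$ term via Assumption~\ref{assumption: our 6}, is the delicate part. The cubic dependence on $E$ arises from summing a per-step drift that itself grows with the number of elapsed local steps across all $E$ epochs, and keeping the bookkeeping consistent with the data-dependent clustering weights $\gamma_{k,i;c}^{j}$ of Lemma~\ref{lemma:our optimization of objective function}, rather than with uniform client weights, is where the argument is most error-prone.
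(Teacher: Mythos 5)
Your end-game matches the paper exactly: telescoping the per-round gain, converting the accumulated squared gradients into $\frac{E}{4\eta}\sum_c\|\boldsymbol{w}_c^{\tau}-\boldsymbol{w}_c^{0}\|^2$, and then applying the tower rule with $\mathbb{E}C=C^*$ (Assumption~\ref{assumption: our 3}) plus the optimality of $(\mathbf{w}^*,\boldsymbol{\hat{\alpha}}^*)$ under $C^*$ to land on $B$. The gap is in how you propose to obtain the per-round ascent inequality and the error constants. You plan to apply the descent lemma to $\mathcal{L}$ directly "using the $L$-smoothness of each $f$" and then re-derive the stochastic-noise and client-drift terms from scratch. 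But $\mathcal{L}$ in Eq.~\eqref{eq:our approximated objective} is a log of a mixture of terms $\hat{\alpha}_{k,c}\widetilde{\mathcal{I}}(\cdot;\boldsymbol{w}_c)$, not a sum of the $f$'s, so Assumption~\ref{assumption: our 1} does not directly give you a smoothness modulus for $\mathcal{L}$ in $\mathbf{w}$; moreover each round also updates $\boldsymbol{\hat{\alpha}}$ via the EM step of Lemma~\ref{lemma:our optimization of objective function}, and a descent-lemma argument in $\mathbf{w}$ alone does not account for the change in $\mathcal{L}$ caused by that update. Your proposal never addresses either issue, and this is precisely where the paper does its real work.

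The paper's route is a reduction rather than a re-derivation: it constructs, for each client and round, the surrogate $g_k^j(\mathbf{w},\boldsymbol{\hat{\alpha}},C)=\frac{1}{N_k}\sum_i\sum_c\gamma_{k,i;c}^j\bigl(f(\mathbf{x}_{k,i},y_{k,i},\boldsymbol{w}_c)+\log\mathcal{P}_c(y)-\log\hat{\alpha}_{k,c}+\log\gamma_{k,i;c}^j\bigr)$, which \emph{is} a $\gamma$-weighted linear combination of the $f$'s and hence inherits $L$-smoothness; it then verifies the four partial-first-order-surrogate conditions (smoothness, $g_k^j\geq-\mathcal{L}_k$ via non-negativity of a KL divergence, gradient matching, and tangency at the previous iterate), which simultaneously handles the $\boldsymbol{\hat{\alpha}}$ update through the majorization--minimization structure. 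With those conditions checked, the averaged squared-gradient bound --- including the $\eta L(\eta L+1)\sigma^2$ and $E^2\eta^2L^2G^2$ constants you were planning to derive by hand from Assumptions~\ref{assumption: our 5} and~\ref{assumption: our 6} --- is imported wholesale from Theorem $3.2'$ of FedEM. So the client-drift bookkeeping with the data-dependent weights $\gamma_{k,i;c}^j$ that you flag as the delicate part is exactly what the surrogate construction lets the paper avoid; without either that device or a full from-scratch analysis of the log-mixture objective under EM updates, your first step does not go through as written.
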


\begin{proof}
    For a client $k$ and its data $(\mathbf{x}_{k,i}, y_{k,i})$, $i=1,2,...,N_{k}$, constructing

\begin{equation}
\begin{aligned}
&g_k^j(\mathbf{w}, \boldsymbol{\hat{\alpha}},C) = \frac{1}{N_k} \sum_{i=1}^{N_k} \sum_{c=1}^{C} \gamma_{k,i;c}^j \left( f(\mathbf{x}_{k,i}, y_{k,i}, \boldsymbol{w}_c)\right.\\
&\left.+ \log(\mathcal{P}_c(y)) - \log(\hat{\alpha}_{k,c}) + \log(\gamma_{k,i;c}^j) \right),
\end{aligned}
\end{equation}
where $\mathbf{w}=[\boldsymbol{w}_1,\boldsymbol{w}_2,...,\boldsymbol{w}_{C}]$.

Then we would like to show the following four conditions are satisfied.
\begin{enumerate}[label=(\arabic*)]
    \item $g_k^j(\mathbf{w}, \boldsymbol{\hat{\alpha}},C)$ is L-smooth to $\mathbf{w}$,
    \item $g_k^j(\mathbf{w}, \boldsymbol{\hat{\alpha}},C) \geq -\mathcal{L}_k(\mathbf{w}, \boldsymbol{\hat{\alpha}},C)$,
    \item $g_k^j(\mathbf{w}, \boldsymbol{\hat{\alpha}},C)$ and $-\mathcal{L}_k(\mathbf{w}, \boldsymbol{\hat{\alpha}},C)$ have the same gradient on $\mathbf{w}$,
    \item $g_k^j(\mathbf{w}^{j-1}, \boldsymbol{\hat{\alpha}}^{j-1},C) = \mathcal{L}_k(\mathbf{w}^{j-1}, \boldsymbol{\hat{\alpha}}^{j-1},C)$.
\end{enumerate}

Firstly, we prove condition (1). it is obviously that $g_k^j(\mathbf{w}, \boldsymbol{\hat{\alpha}},C)$ is L-smooth to $\mathbf{w}$ as it is a linear combination of $C$ smooth functions.

Secondly, we prove condition (2). We define $ r(\mathbf{w}, \boldsymbol{\hat{\alpha}},C) = g_k^t(\mathbf{w}, \boldsymbol{\hat{\alpha}},C) + \mathcal{L}_k(\mathbf{w}, \boldsymbol{\hat{\alpha}},C) $, we will have
\begin{equation}
    \begin{aligned}
    &r(\mathbf{w}, \boldsymbol{\hat{\alpha}},C) = g_k^t(\mathbf{w}, \boldsymbol{\hat{\alpha}},C) + \mathcal{L}_k(\mathbf{w}, \boldsymbol{\hat{\alpha}},C)  \\
    &= \frac{1}{N_k} \sum_{i=1}^{N_k} \sum_{c=1}^{C} \gamma_{k,i;c}^j \left( \log(\gamma_{k,i;c}^j) - \log(\hat{\alpha}_{k,c} \widetilde{\mathcal{I}}(\mathbf{x}, y, \boldsymbol{w}_c)) \right)+ \mathcal{L}_k(\mathbf{w}, \boldsymbol{\hat{\alpha}},C) \\
    &= \frac{1}{N_k} \sum_{i=1}^{N_k} \sum_{c=1}^{C} \gamma_{k,i;c}^j ( \log(\gamma_{k,i;c}^j) - \log(\hat{\alpha}_{k,c} \widetilde{\mathcal{I}}(\mathbf{x}, y, \boldsymbol{w}_c)) + \log(\sum_{n=1}^{C} \hat{\alpha}_{k,n} \widetilde{\mathcal{I}}(\mathbf{x}, y, \boldsymbol{w}_n)) )\\ 
    &= \frac{1}{N_k} \sum_{i=1}^{N_k} \sum_{c=1}^{C} \gamma_{k,i;c}^j \left( \log(\gamma_{k,i;c}^j)- \log \left( \frac{\hat{\alpha}_{k,c} \widetilde{\mathcal{I}}(\mathbf{x}, y, \boldsymbol{w}_c)}{\sum_{n=1}^{C} \hat{\alpha}_{k,n} \widetilde{\mathcal{I}}(\mathbf{x}, y, \boldsymbol{w}_n)} \right) \right)\\
    &= \frac{1}{N_k} \sum_{i=1}^{N_k} KL \left( \gamma_{k,i;c}^j \| \frac{\hat{\alpha}_{k,c} \widetilde{\mathcal{I}}(\mathbf{x}, y, \boldsymbol{w}_c)}{\sum_{n=1}^{C} \hat{\alpha}_{k,n} \widetilde{\mathcal{I}}(\mathbf{x}, y, \boldsymbol{w}_n)}\right) \geq 0,
    \end{aligned}
\end{equation}
where $KL(\cdot\|\cdot)$ represents the KL divergence.

Thirdly, we prove condition (3). We can easily obtain that

$$
\frac{\partial g_k^t(\mathbf{w}, \boldsymbol{\hat{\alpha}},C)}{\partial \boldsymbol{w}_c} = \frac{1}{N_k} \sum_{i=1}^{N_k} \gamma_{k,i;c}^j 
\nabla_{\boldsymbol{w}_c} f(\mathbf{x}_{k,i}, y_{k,i}, \boldsymbol{w}_c).
$$
This is align with the gradient of $ -\mathcal{L}_k(\mathbf{w}, \boldsymbol{\hat{\alpha}},C) $.

Finally, we prove condition (4). From Eq.~\eqref{eq:our gamma} in Lemma~\ref{lemma:our optimization of objective function}, we can found that
$$ \gamma_{k,i;c}^j = \frac{\hat{\alpha}_{k,c}^{j-1} \widetilde{\mathcal{I}}(\mathbf{x}, y, \boldsymbol{w}_c^{j-1})}{\sum_{n=1}^{C} \hat{\alpha}_{k,n}^{j-1} \widetilde{\mathcal{I}}(\mathbf{x}, y, \boldsymbol{w}_n^{j-1})}.$$ 

Based on the theoretical federated surrogate optimization framework established in Theorem~$3.2^{'}$ of FedEM~\cite{DBLP:conf/nips/MarfoqNBKV21}, and since the above conditions have been verified, we have 

\begin{equation}
\begin{aligned}
\frac{1}{\tau} \sum_{j=1}^{\tau} \sum_{c=1}^{C}\mathbb{E}_{\xi} \left\| \nabla_{\boldsymbol{w}_c} \mathcal{L}(\mathbf{w}^{j}, \boldsymbol{\hat{\alpha}}^{j},C) \right\|^2 &\leq 4 \mathbb{E}_{\xi} \left[ \frac{\mathcal{L}(\mathbf{w}^{*}, \boldsymbol{\hat{\alpha}}^{*},C) - \mathcal{L}(\mathbf{w}^{0}, \boldsymbol{\hat{\alpha}}^{0},C)}{E \eta \tau} \right] \\
&\quad+ 8 \frac{\eta L (\eta L + 1) \cdot \sigma^2 + 2E^2 \eta^2 L^2 G^2}{\tau},
\end{aligned}
\end{equation}
where $E$ denotes the number of local update steps performed by each client. 

Then we have
\begin{align}
&\mathbb{E}_{\xi} \left[ \mathcal{L}(\mathbf{w}^{*}, \boldsymbol{\hat{\alpha}}^{*},C) \right] \notag\\
&\geq  \frac{E\eta}{4} \sum_{j=1}^{\tau} \sum_{c=1}^{C}\mathbb{E}_{\xi} \left\| \nabla_{\boldsymbol{w}_c} \mathcal{L}(\mathbf{w}^{j}, \boldsymbol{\hat{\alpha}}^{j},C) \right\|^2 -2 (E\eta^{2} L (\eta L + 1) \cdot \sigma^2 + 2E^3 \eta^3 L^2 G^2)\notag\\
&\quad+\mathbb{E}_{\xi} \left[  \mathcal{L}(\mathbf{w}^{0}, \boldsymbol{\hat{\alpha}}^{0},C) \right]\\
&=\frac{E\eta}{4}  \sum_{c=1}^{C}\sum_{j=1}^{\tau}\frac{1}{\eta^{2}}\mathbb{E}_{\xi} \left\| \boldsymbol{w}_{c}^{j}-\boldsymbol{w}_{c}^{j-1} \right\|^2 -2 (E\eta^{2} L (\eta L + 1) \cdot \sigma^2+ 2E^3 \eta^3 L^2 G^2)\notag\\
&\quad+\mathbb{E}_{\xi} \left[  \mathcal{L}(\mathbf{w}^{0}, \boldsymbol{\hat{\alpha}}^{0},C) \right]\\
&\geq\frac{E}{4\eta}  \sum_{c=1}^{C}\mathbb{E}_{\xi} \left\| \boldsymbol{w}_{c}^{*}-\boldsymbol{w}_{c}^{0} \right\|^2 -2 (E\eta^{2} L (\eta L + 1) \cdot \sigma^2 + 2E^3 \eta^3 L^2 G^2)+\mathbb{E}_{\xi} \left[  \mathcal{L}(\mathbf{w}^{0}, \boldsymbol{\hat{\alpha}}^{0},C) \right] \label{eq:proof of FedDAA middle}
\end{align}

By applying the inequality~\eqref{eq:proof of FedDAA middle}, we have

\begin{align}
    &\mathbb{E}_{C}\mathbb{E}_{\xi}[\mathcal{L}(\mathbf{w}^{\tau}, \boldsymbol{\hat{\alpha}}^{\tau},C^{*}) - \mathcal{L}(\mathbf{w}^{\tau}, \boldsymbol{\hat{\alpha}}^{\tau},C)] \notag\\
    &\leq \mathbb{E}_{C}[\mathcal{L}(\mathbf{w}^{\tau}, \boldsymbol{\hat{\alpha}}^{\tau},C^{*}) - \frac{E}{4\eta}  \sum_{c=1}^{C}\mathbb{E}_{\xi} \left\| \boldsymbol{w}_{c}^{*}-\boldsymbol{w}_{c}^{0} \right\|^2 +2 (E\eta^{2} L (\eta L + 1)  \sigma^2 + 2E^3 \eta^3 L^2 G^2) \notag\\
    &\quad -\mathbb{E}_{\xi} [  \mathcal{L}(\mathbf{w}^{0}, \boldsymbol{\hat{\alpha}}^{0},C) ] \label{eq:ineq1}\\
    &\leq \mathcal{L}(\mathbf{w}^{*}, \boldsymbol{\hat{\alpha}}^{*},C^{*})+2 (E\eta^{2} L (\eta L + 1) \cdot \sigma^2 + 2E^3 \eta^3 L^2 G^2) \notag\\
    &\quad -\mathbb{E}_{C}\left[\frac{E}{4\eta}  \sum_{c=1}^{C}\mathbb{E}_{\xi} \left\| \boldsymbol{w}_{c}^{*}-\boldsymbol{w}_{c}^{0} \right\|^2 + \mathbb{E}_{\xi} \left[  \mathcal{L}(\mathbf{w}^{0}, \boldsymbol{\hat{\alpha}}^{0},C) \right] \right] \label{eq:proof of FedDAA ineq2}\\
    &= \mathcal{L}(\mathbf{w}^{*}, \boldsymbol{\hat{\alpha}}^{*},C^{*}) -  \mathcal{L}(\mathbf{w}^{0}, \boldsymbol{\hat{\alpha}}^{0},C^{*}) +2 (E\eta^{2} L (\eta L + 1) \sigma^2+ 2E^3 \eta^3 L^2 G^2) \notag\\
    &\quad - \frac{E}{4\eta} \mathbb{E}_{C}\left[ \sum_{c=1}^{C}\mathbb{E}_{\xi} \left\| \boldsymbol{w}_{c}^{*}-\boldsymbol{w}_{c}^{0} \right\|^2 \right] \label{eq:proof of FedDAA ineq3}
\end{align}

To justify inequality~\eqref{eq:proof of FedDAA ineq2}, we note that the randomness in $C$ does not affect $C^*$, hence $\mathbb{E}_{C}[\mathcal{L}(\mathbf{w}^{\tau}, \boldsymbol{\hat{\alpha}}^{\tau},C^{*})] = \mathcal{L}(\mathbf{w}^{\tau}, \boldsymbol{\hat{\alpha}}^{\tau},C^{*})$. Furthermore, since $(\mathbf{w}^*, \boldsymbol{\hat{\alpha}}^*)$ is optimal under $C^*$, we have $\mathcal{L}(\mathbf{w}^{\tau}, \boldsymbol{\hat{\alpha}}^{\tau},C^{*}) \leq \mathcal{L}(\mathbf{w}^{*}, \boldsymbol{\hat{\alpha}}^{*},C^{*})$.
The inequality~\eqref{eq:proof of FedDAA ineq3} holds due to the following two reasons: (i) Since $\xi $ is independent of $\mathbf{w}^{0}$ and $ \boldsymbol{\hat{\alpha}}^{0}$, we have $\mathbb{E}_{\xi} [  \mathcal{L}(\mathbf{w}^{0}, \boldsymbol{\hat{\alpha}}^{0},C) ]=\mathcal{L}(\mathbf{w}^{0}, \boldsymbol{\hat{\alpha}}^{0},C)$;  (ii) under Assumption~\ref{assumption: our 3}, we obtain $\mathbb{E}_{C}[\mathcal{L}(\mathbf{w}^{0}, \boldsymbol{\hat{\alpha}}^{0},C)]=\mathcal{L}(\mathbf{w}^{0}, \boldsymbol{\hat{\alpha}}^{0},C^{*})$.

By the tower property of conditional expectation, we have $\mathbb{E}_{C}[\sum_{c=1}^{C}\mathbb{E}_{\xi}\|\boldsymbol{w}^{\tau}_{c}-\boldsymbol{w}^{0}_{c}\|^2]=\mathbb{E}_C[\mathbb{E}_{\boldsymbol{w}}[\sum_{c=1}^{C}\mathbb{E}_{\xi}\|\mathbf{w}^{\tau}_{c}-\boldsymbol{w}^{0}_{c}\|^2|C]]$.
Applying Assumption~\ref{assumption: our 3}, we have
    \begin{align}
        \mathbb{E}_{C}[\sum_{c=1}^{C}\mathbb{E}_{\xi}\|\boldsymbol{w}^{\tau}_{c}-\boldsymbol{w}^{0}_{c}\|^2]&=\mathbb{E}_{C}[\mathbb{E}_{\boldsymbol{w}}[\sum_{c=1}^{C}\mathbb{E}_{\xi}\|\boldsymbol{w}^{\tau}_{c}-\boldsymbol{w}^{0}_{c}\|^2|C]]\\
        &=\mathbb{E}_{C}[\sum_{c=1}^{C}\mathbb{E}_{\boldsymbol{w}}[\mathbb{E}_{\xi}\|\boldsymbol{w}^{\tau}_{c}-\boldsymbol{w}^{0}_{c}\|^2]]\\
        &=\mathbb{E}_{C}[\sum_{c=1}^{C}\Delta_{\xi,c}^2]\\
        &=\Delta_{\xi,c}^2\mathbb{E}{C}\\
        &\leq C^{*}\Delta_{\xi}^2 \label{eq:proof of FedDAA delta ineq}
    \end{align}
By applying inequality~\eqref{eq:proof of FedDAA delta ineq} to inequality~\eqref{eq:proof of FedDAA ineq3}, we obtain
\begin{equation}
    \mathbb{E}_{C}\mathbb{E}_{\xi}[\mathcal{L}(\mathbf{w}^{\tau}, \boldsymbol{\hat{\alpha}}^{\tau},C^{*}) - \mathcal{L}(\mathbf{w}^{\tau}, \boldsymbol{\hat{\alpha}}^{\tau},C)] \leq \mathcal{L}(\mathbf{w}^{*}, \boldsymbol{\hat{\alpha}}^{*},C^{*}) -\mathcal{L}(\mathbf{w}^{0}, \boldsymbol{\hat{\alpha}}^{0},C^{*})+B,
\end{equation}
where 
\begin{equation}
    B= -\frac{EC^{*}\Delta_{\xi}^{2}}{4\eta}+ 2(E\eta^2L(\eta L+1)\sigma^2+2E^{3}\eta^{3}L^{2}G^{2}).
\end{equation}

\end{proof}

\section{Experiment}\label{sec:appendix Experiment}
\subsection{Detailed Experiment Settings}
\paragraph{Experiments compute resources.} All experiments are conducted using PyTorch on a node with two Intel Xeon Gold 6240 CPUs and one NVIDIA TITAN RTX GPU (24GB).

\paragraph{Concept drift settings.} There are 6 time steps. Each time step contains 40 communication rounds.  We set the client sampling rate to 0.5. There are 60 clients.
\begin{itemize}
    \item \textbf{At time step 1}, there are \textbf{two} decision boundaries(dataset conditional distribution) in the system. 30 clients will obtain data with original labels, while the labels of the other 30 clients will be modified from $y$ to $R-1-y$. $R$ is the number of class. The images in the clients are not rotated (\textbf{0 degree}). 
    \item \textbf{At time step 2}, there are \textbf{three} decision boundaries in the system. One-third of the clients will obtain data with original labels, while the labels of the other one-third clients will be modified from $y$ to $R-1-y$. For the remaining one-third clients, the labels will be modified from $y$ to $y+1 \bmod R$. The images in the clients are rotated \textbf{120 degrees}. 
    \item \textbf{At time step 3}, there are \textbf{four} decision boundaries in the system. One-fourth of the clients will obtain data with original labels, while the labels of another one-fourth clients will be modified, from $y$ to $R-1-y$. For the next one-fourth clients, the labels will be modified from $y$ to $y+1 \bmod R$. For the final one-fourth clients, the labels will be modified, from $y$ to $y+2 \bmod R$. The images in the clients are rotated \textbf{240 degrees}.
    \item \textbf{At time step 4}, there are \textbf{four} decision boundaries in the system. One-fourth of the clients will obtain data with original labels, while the labels of another one-fourth clients will be modified, from $y$ to $R-1-y$. For the next one-fourth clients, the labels will be modified from $y$ to $y+1 \bmod R$. For the final one-fourth clients, the labels will be modified, from $y$ to $y+2 \bmod R$. The images in the clients are not rotated (\textbf{0 degree}). 
    \item \textbf{At time step 5}, there are \textbf{four} decision boundaries in the system. One-fourth of the clients will obtain data with original labels, while the labels of another one-fourth clients will be modified, from $y$ to $R-1-y$. For the next one-fourth clients, the labels will be modified from $y$ to $y+1 \bmod R$. For the final one-fourth clients, the labels will be modified, from $y$ to $y+2 \bmod R$. The images in the clients are rotated \textbf{120 degrees}.  
    \item \textbf{At time step 6}, there are \textbf{four} decision boundaries in the system. One-fourth of the clients will obtain data with original labels, while the labels of another one-fourth clients will be modified, from $y$ to $R-1-y$. For the next one-fourth clients, the labels will be modified from $y$ to $y+1 \bmod R$. For the final one-fourth clients, the labels will be modified, from $y$ to $y+2 \bmod R$. The images in the clients are rotated \textbf{240 degrees}.
\end{itemize}

To provide a clearer and more intuitive understanding of the experimental setup, the settings are illustrated in Fig.~\ref{fig:Concept drift settings} and summarized in Table~\ref{table:Dataset setting of concept drift.}.

\begin{table}[t]
    \centering
    \caption{Concept drift settings.}
    \begin{tabular}{@{}c >{\centering\arraybackslash} p{2cm}>{\centering\arraybackslash} p{2cm}@{}}
        \toprule
        Time steps & Number of decision boundaries & Image rotated degrees \\
        \midrule
        
        1 & 2 & 0 \\
        2 & 3 & 120 \\
        3 & 4 & 240 \\
        4 & 4 & 0 \\
        5 & 4 & 120 \\
        6 & 4 & 240 \\
        \bottomrule
    \end{tabular}

    \label{table:Dataset setting of concept drift.}
\end{table}

\begin{figure}[t]
    \centering
    \includegraphics[width=1\textwidth]{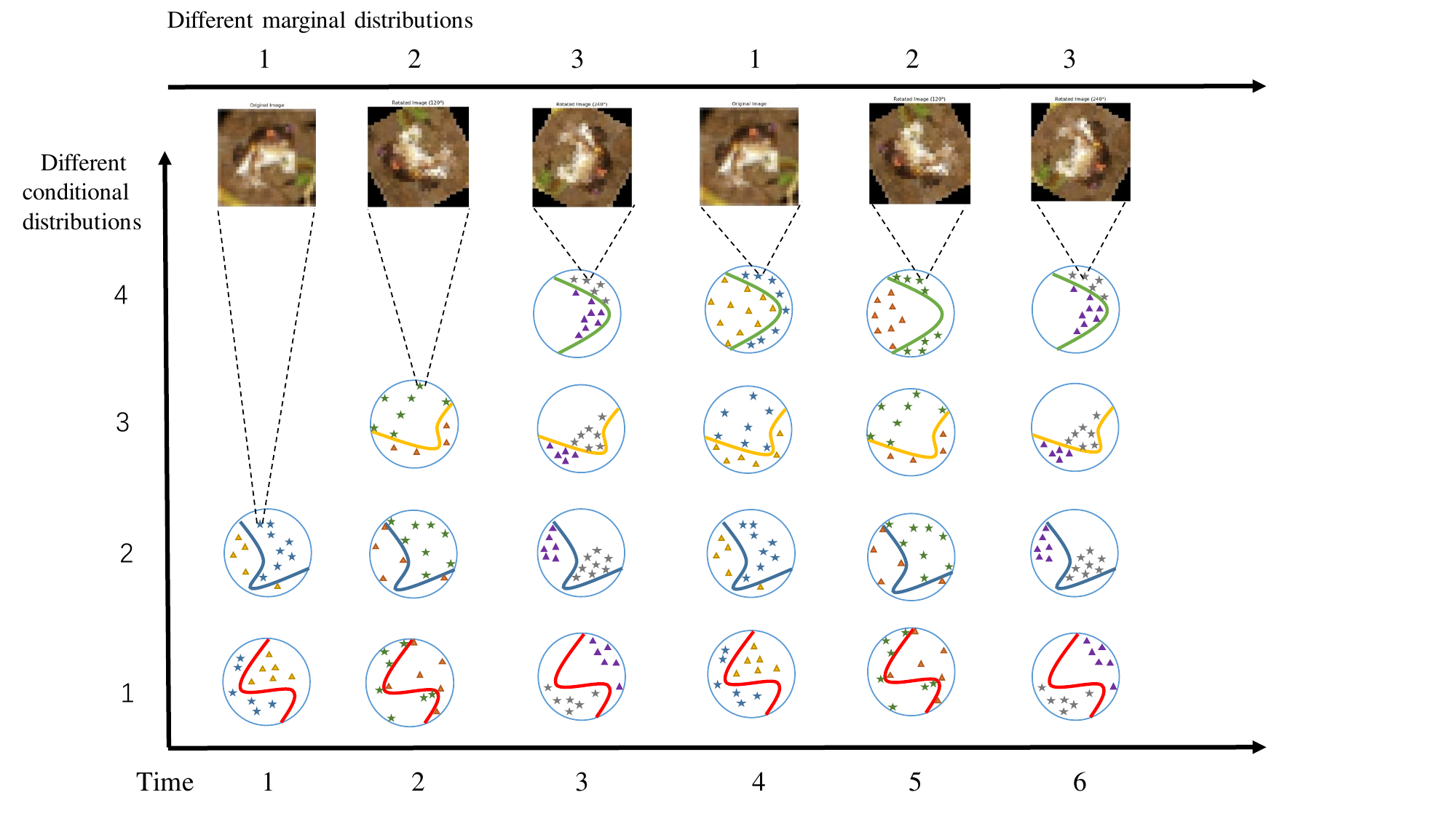}
    \caption{Visualization of the concept drift settings.}
    \label{fig:Concept drift settings}
\end{figure}

\paragraph{Hyperparameters.} We adopt SGD as the optimizer across all baselines, with a batch size of $128$ and momentum set to $0.9$. Unless otherwise specified, we use $E = 1$ local epoch per round for all datasets. Each algorithm is trained for 40 communication rounds, and we report the average test accuracy across all clients and all time steps.

FedAvg: The learning rate is tuned from $\{0.01, 0.03, 0.06, 0.1\}$ and set to $0.06$.

A-FedAvg: The learning rate is tuned from $\{0.01, 0.03, 0.06, 0.1\}$ and set to $0.06$.

Flash: The learning rate is tuned from $\{0.01, 0.03, 0.06, 0.1\}$ and set to $0.03$.

FedRC: The learning rate is tuned from $\{0.01, 0.03, 0.06, 0.1\}$ and set to $0.1$. The initial number of clusters is set to 2.

FedDrift: The learning rate is tuned from $\{0.01, 0.03, 0.06, 0.1\}$ and set to $0.06$. The initial number of clusters is set to 2.

FedDAA: The learning rate  is tuned from $\{0.01, 0.03, 0.06, 0.1\}$ and set to $0.06$. 

Oracle: The learning rate is tuned from $\{0.01, 0.03, 0.06, 0.1\}$ and set to $0.1$.

\subsection{Additional Experimental Results}
\begin{figure}[t]
    \centering
    \includegraphics[width=1\textwidth]{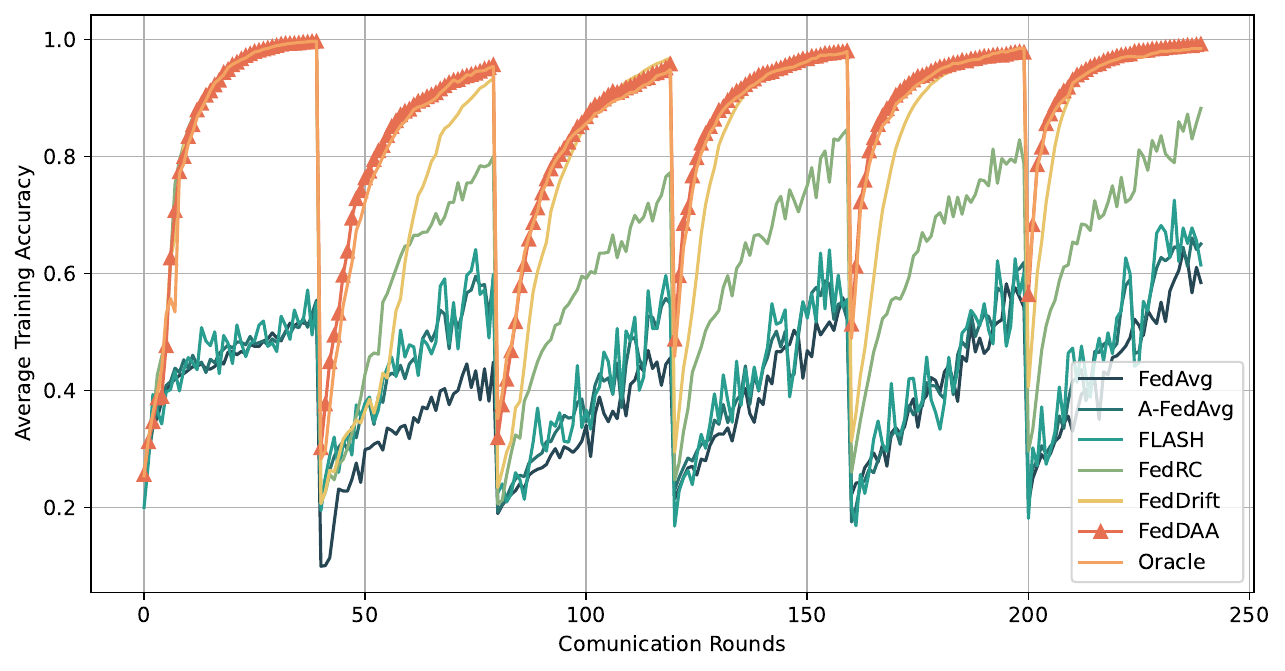}
    \caption{Average training accuracy of all methods on CIFAR-10 over 6 time steps. Each time step contains 40 communication rounds.}
    \label{fig:Average test accuracy over time.}
\end{figure}
\begin{figure}[!htbp]
    \centering
    \begin{minipage}{\textwidth}
        \centering
        \begin{minipage}{0.32\textwidth}
            \includegraphics[width=\linewidth]{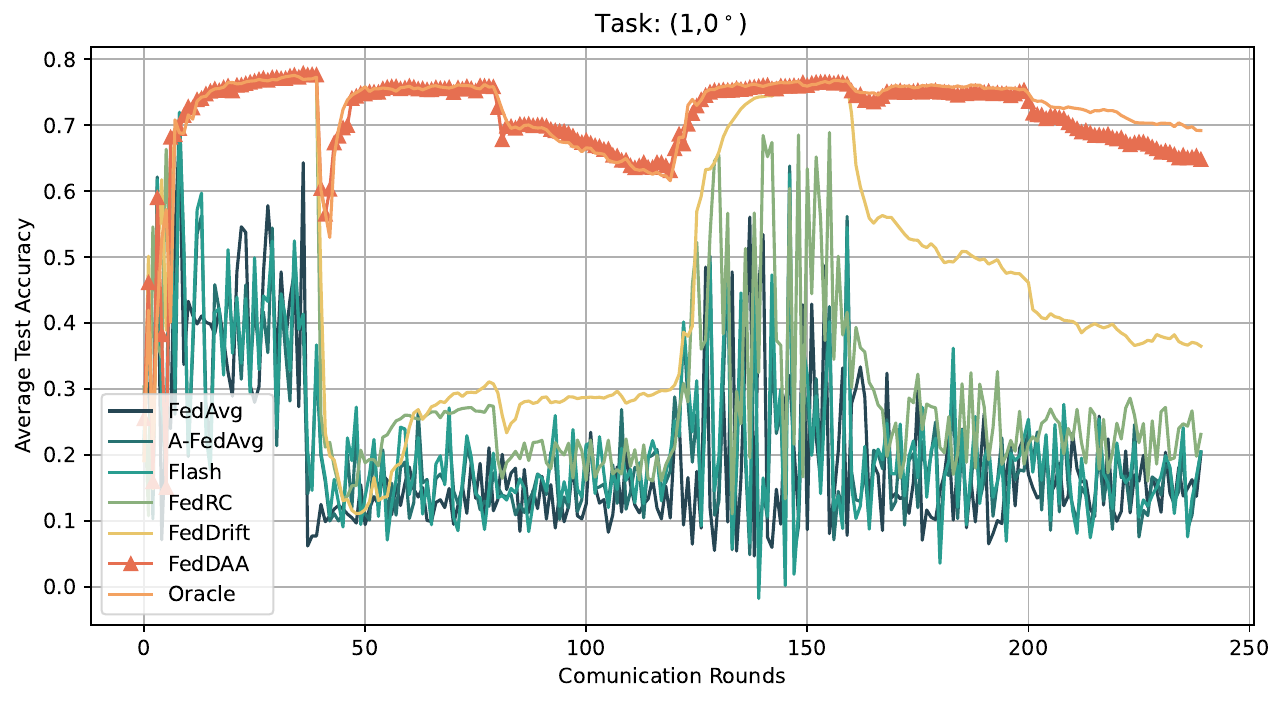}
            \subcaption{Task: (1, $0^\circ$)}
            \label{fig:12subfig-a}
        \end{minipage}
        \begin{minipage}{0.32\textwidth}
            \includegraphics[width=\linewidth]{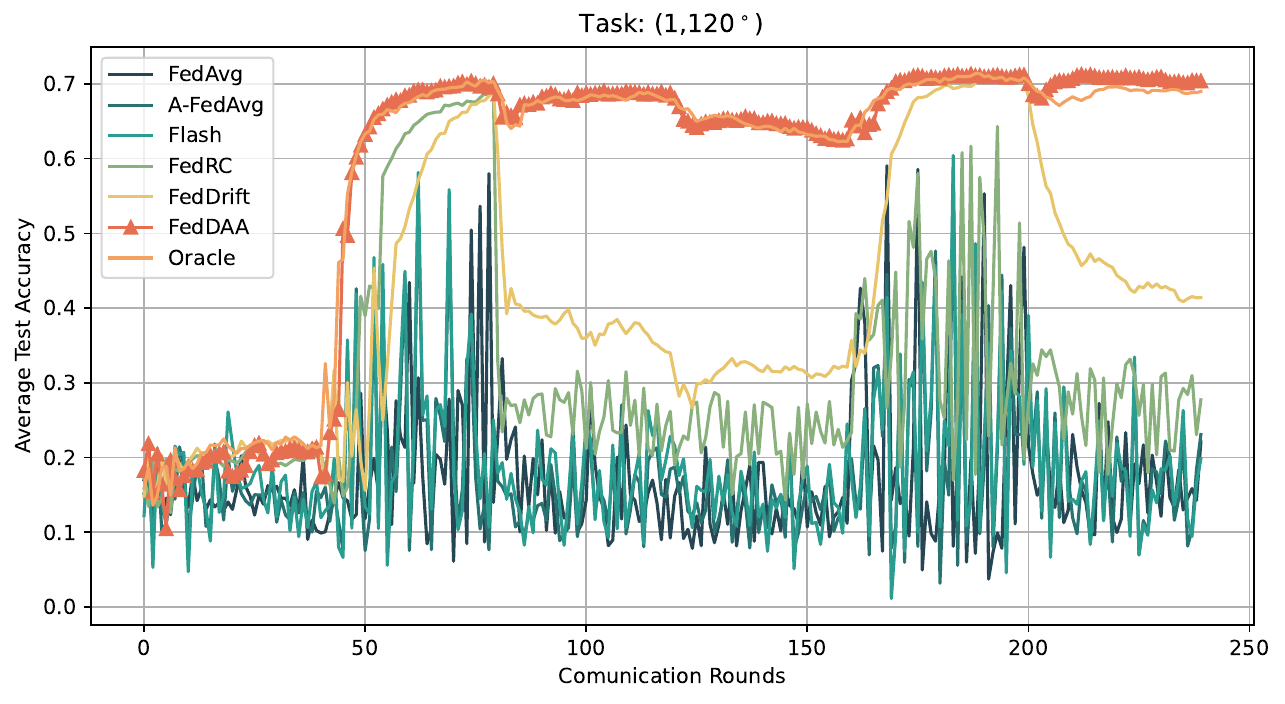}
            \subcaption{Task: (1, $120^\circ$)}
        \end{minipage}
        \begin{minipage}{0.32\textwidth}
            \includegraphics[width=\linewidth]{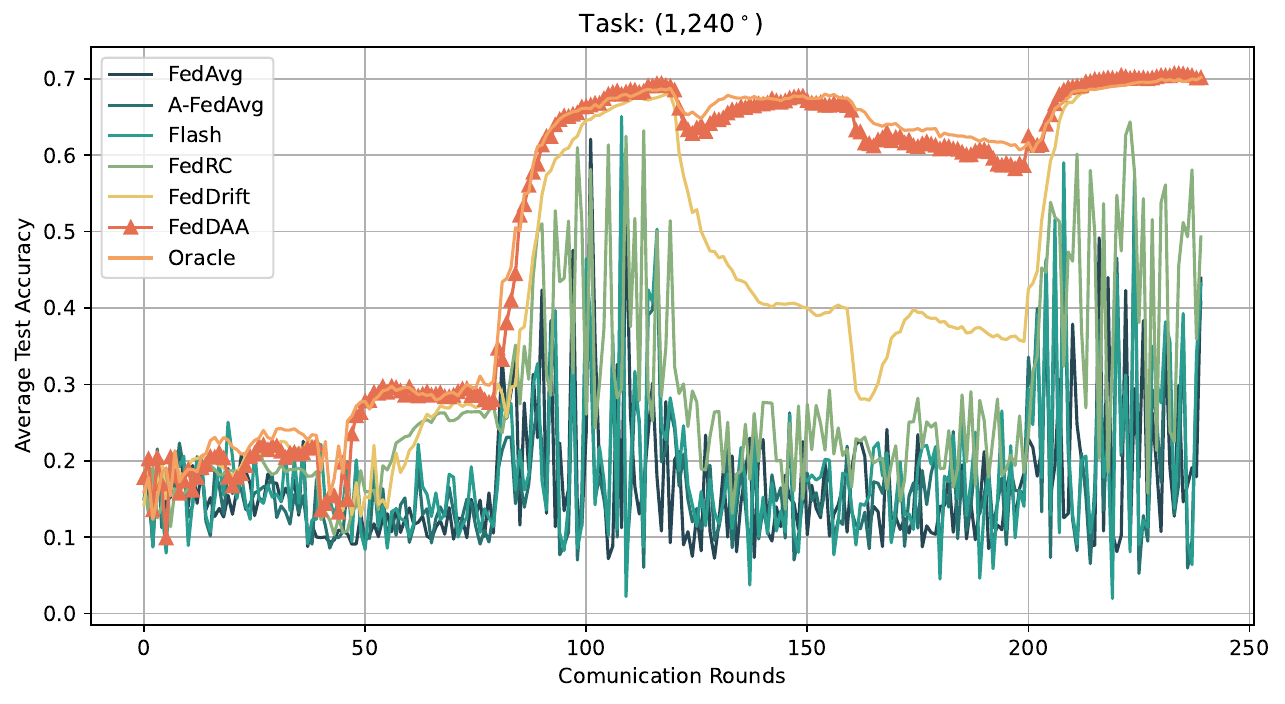}
            \subcaption{Task: (1, $240^\circ$)}
        \end{minipage}
    \end{minipage}
    

    \begin{minipage}{\textwidth}
        \centering
        \begin{minipage}{0.32\textwidth}
            \includegraphics[width=\linewidth]{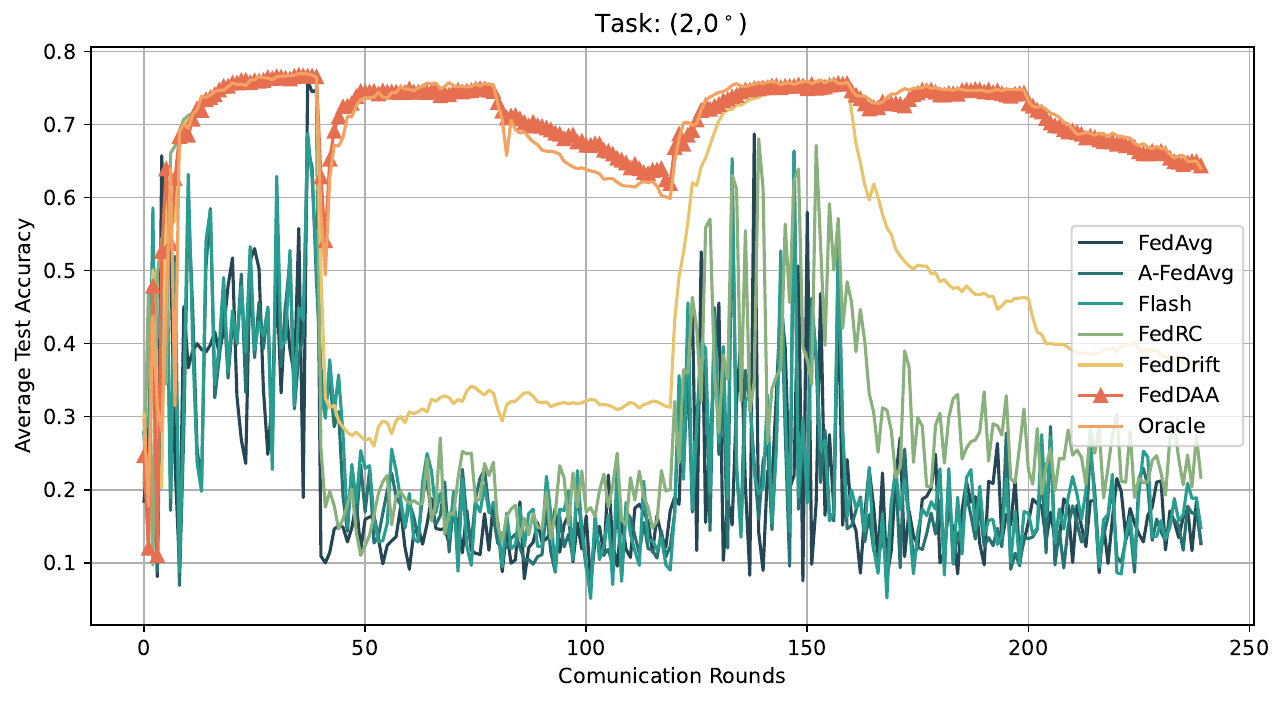}
            \subcaption{Task: (2, $0^\circ$)}
        \end{minipage}
        \begin{minipage}{0.32\textwidth}
            \includegraphics[width=\linewidth]{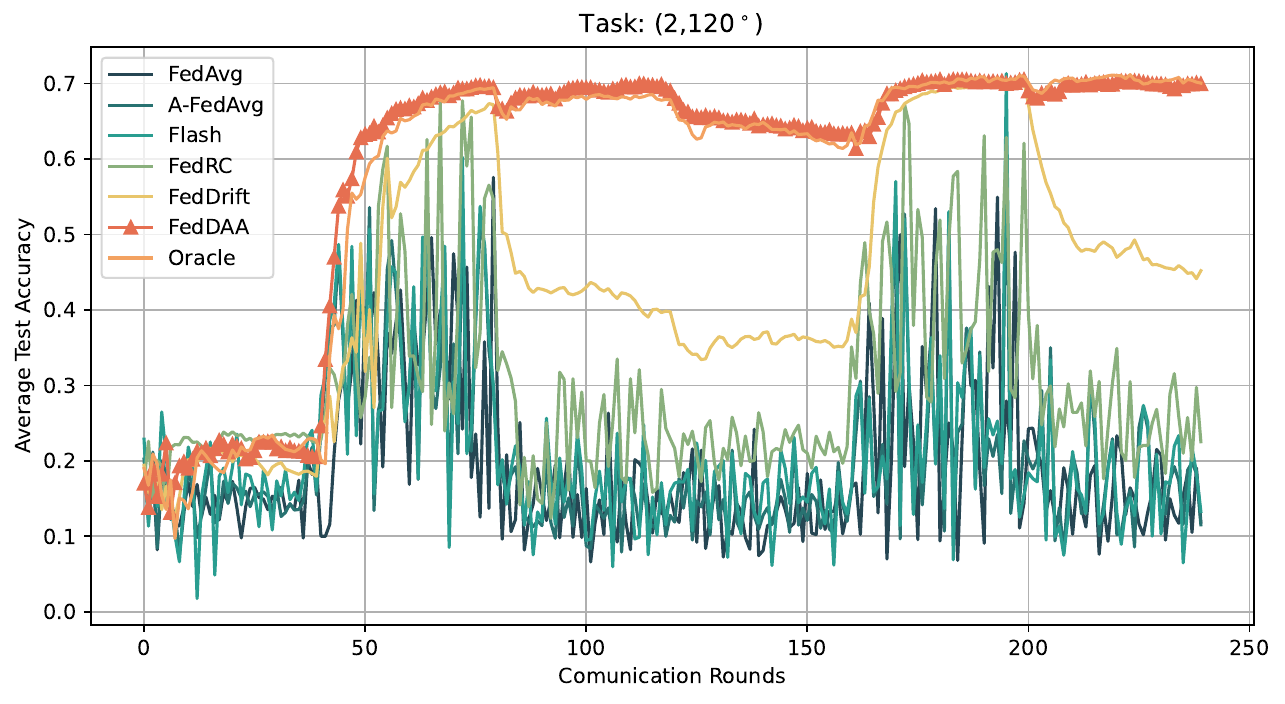}
            \subcaption{Task: (2, $120^\circ$)}
        \end{minipage}
        \begin{minipage}{0.32\textwidth}
            \includegraphics[width=\linewidth]{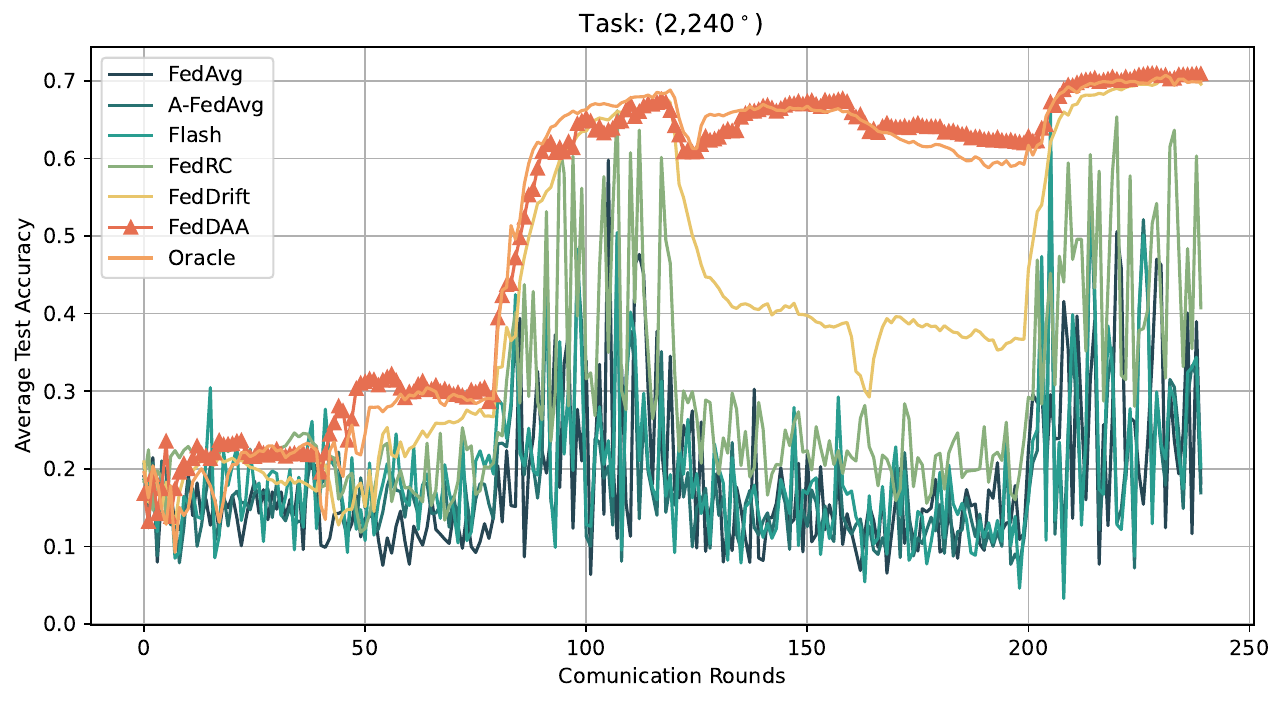}
            \subcaption{Task: (2, $240^\circ$)}
        \end{minipage}
    \end{minipage}
    

    \begin{minipage}{\textwidth}
        \centering
        \begin{minipage}{0.32\textwidth}
            \includegraphics[width=\linewidth]{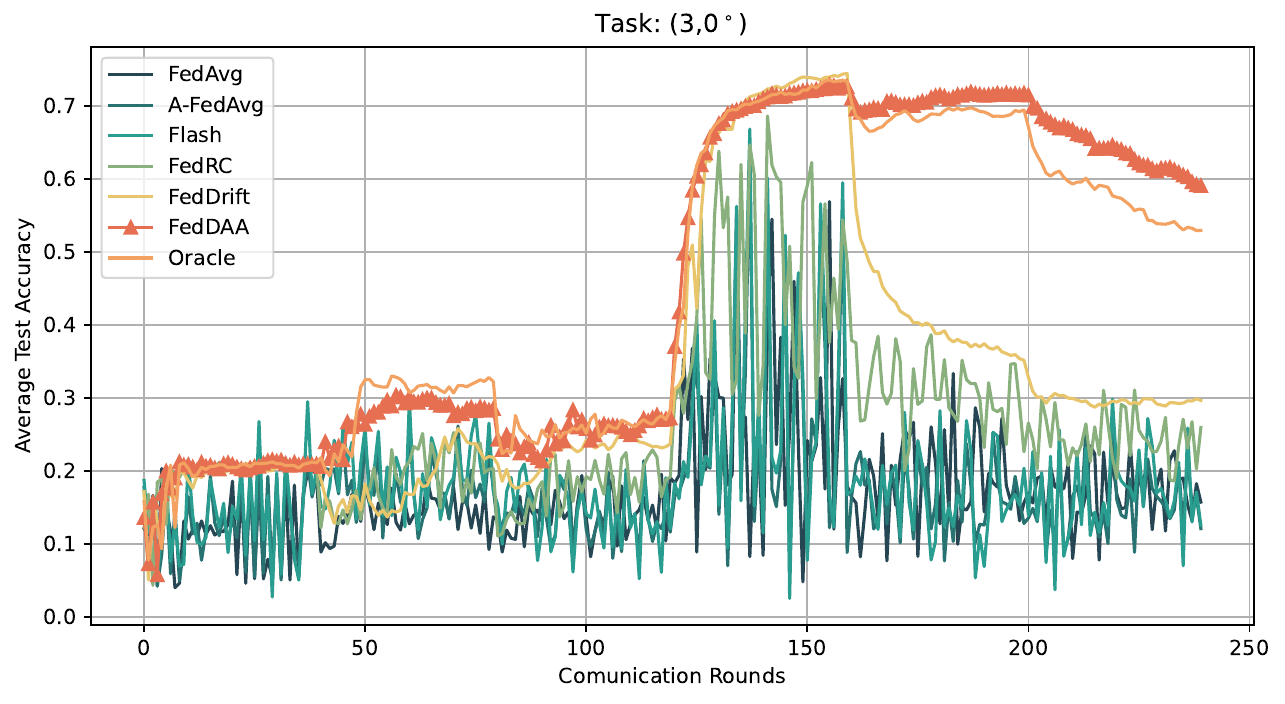}
            \subcaption{Task: (3, $0^\circ$)}
        \end{minipage}
        \begin{minipage}{0.32\textwidth}
            \includegraphics[width=\linewidth]{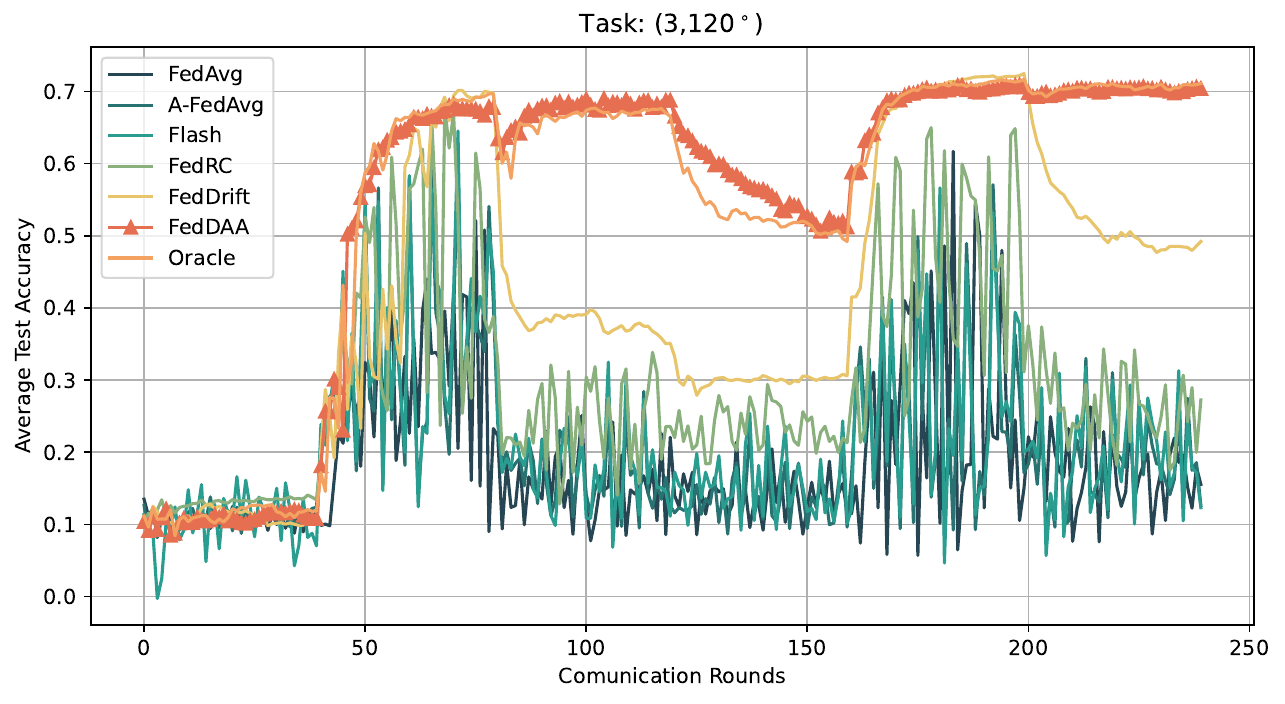}
            \subcaption{Task: (3, $120^\circ$)}
        \end{minipage}
        \begin{minipage}{0.32\textwidth}
            \includegraphics[width=\linewidth]{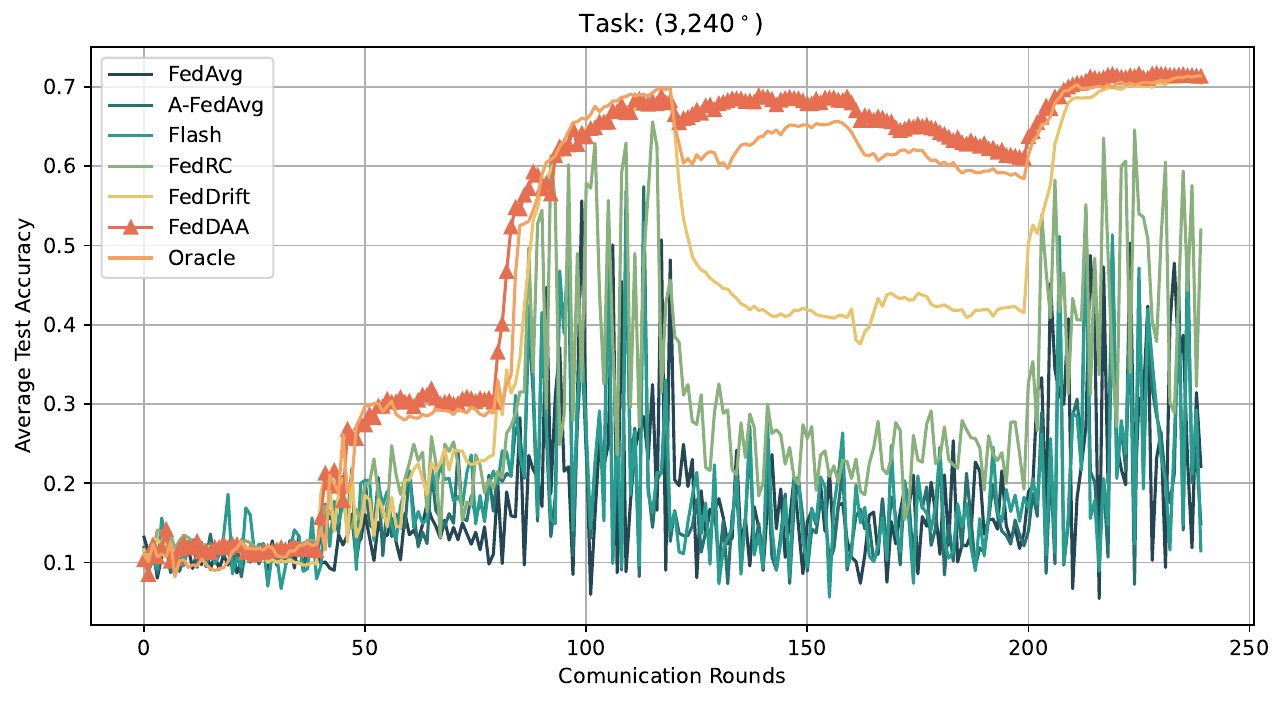}
            \subcaption{Task: (3, $240^\circ$)}
        \end{minipage}
    \end{minipage}
    

    \begin{minipage}{\textwidth}
        \centering
        \begin{minipage}{0.32\textwidth}
            \includegraphics[width=\linewidth]{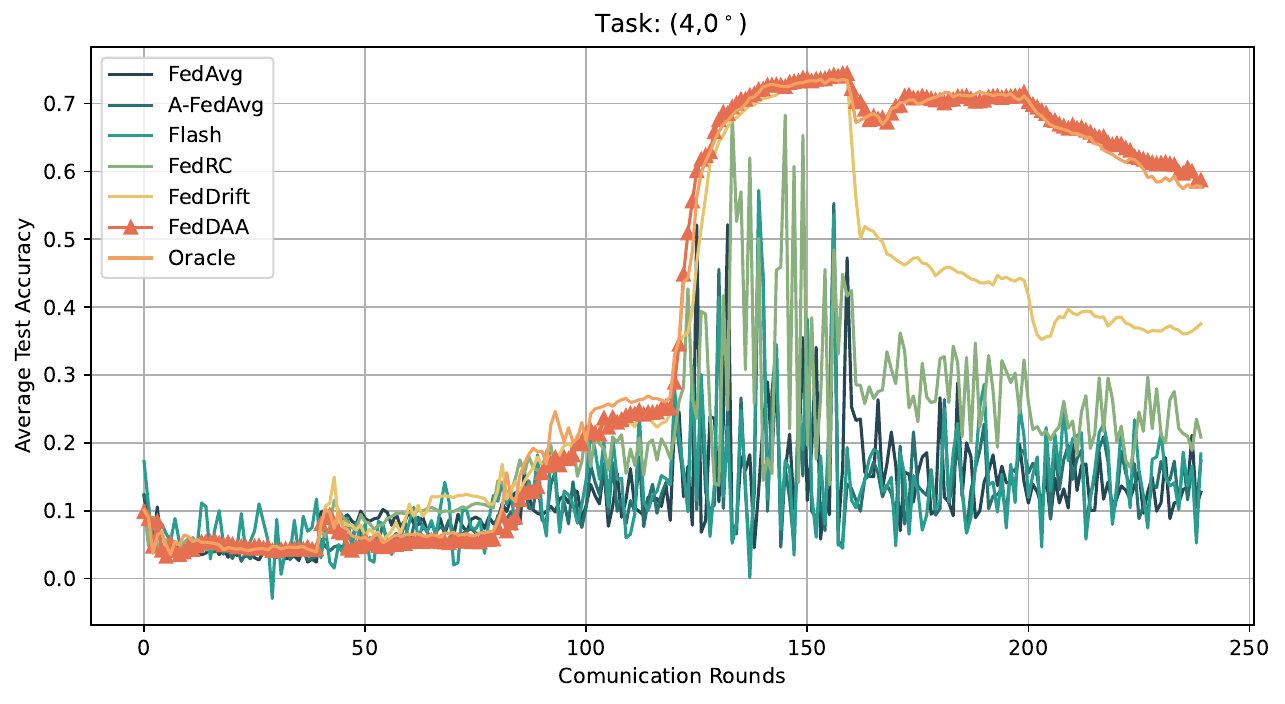}
            \subcaption{Task: (4, $0^\circ$)}
        \end{minipage}
        \begin{minipage}{0.32\textwidth}
            \includegraphics[width=\linewidth]{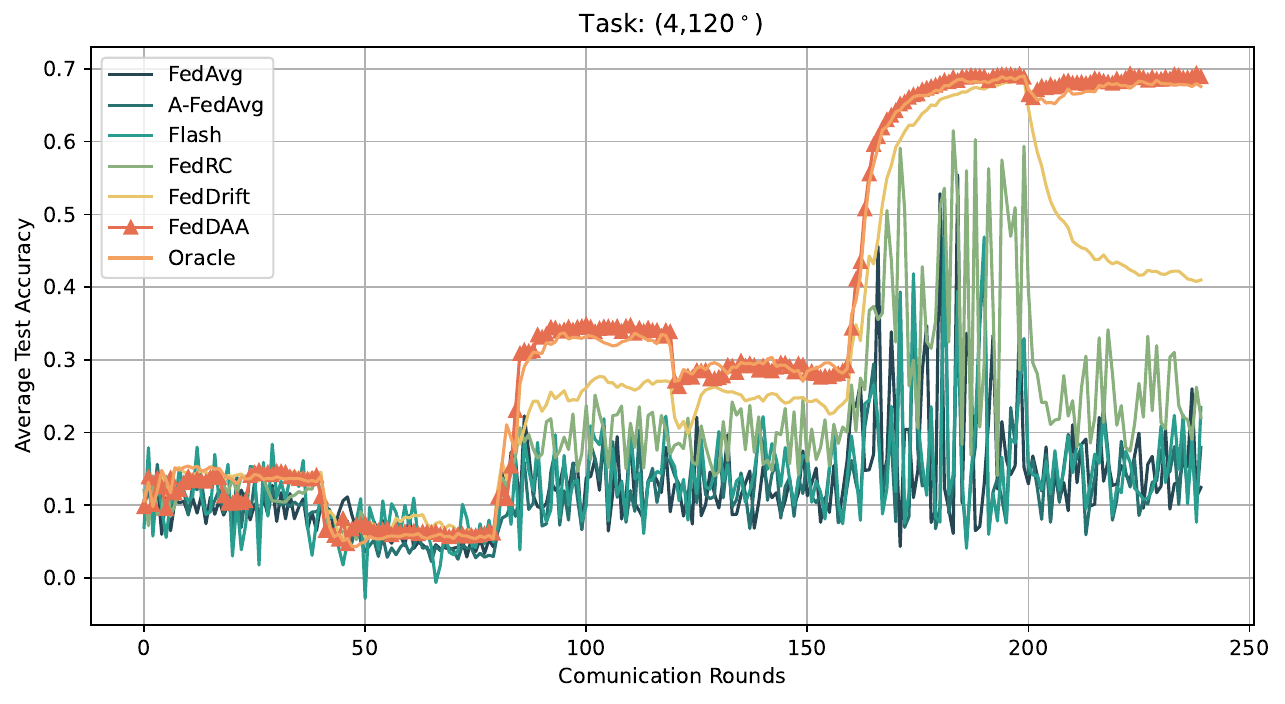}
            \subcaption{Task: (4, $120^\circ$)}
        \end{minipage}
        \begin{minipage}{0.32\textwidth}
            \includegraphics[width=\linewidth]{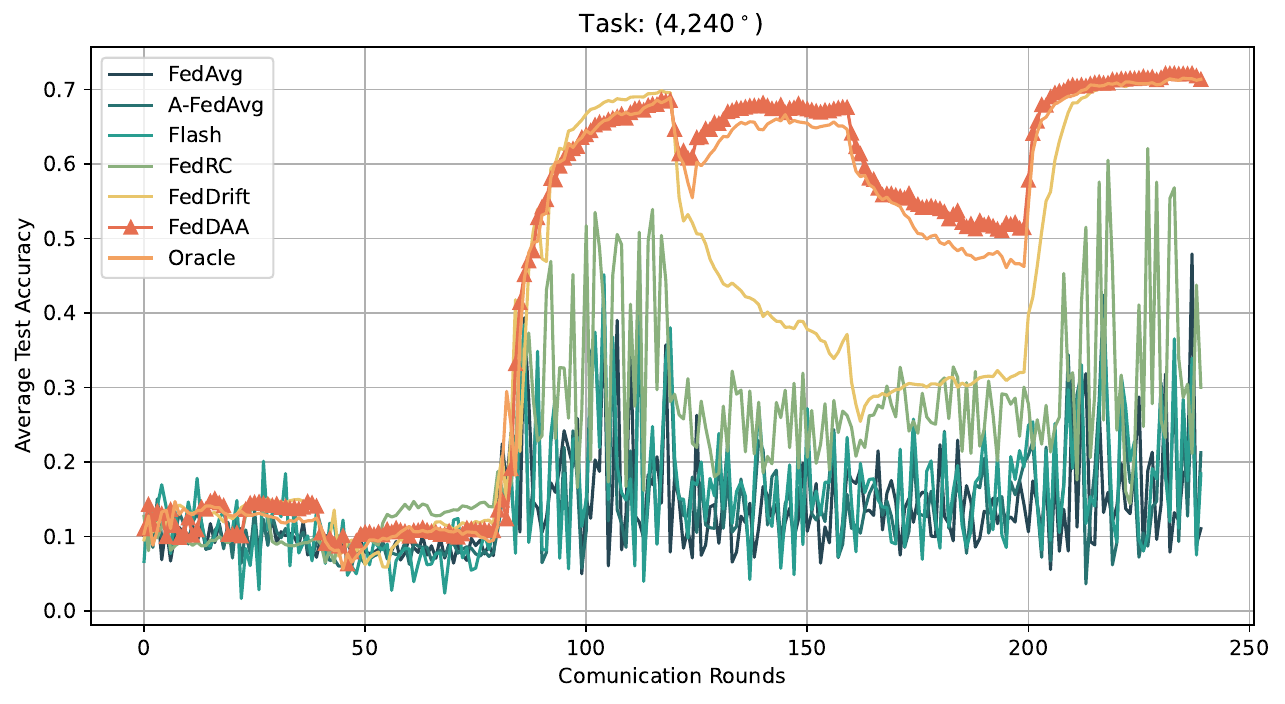}
            \subcaption{Task: (4, $240^\circ$)}
        \end{minipage}
    \end{minipage}

    \caption{Average test accuracy on CIFAR-10 for all tasks over 6 time steps. Each time step contains 40 communication rounds.}
    \label{fig:12-subfigs}
\end{figure}
\paragraph{Comparison with baselines using average accuracy over time.} Fig.~\ref{fig:Average test accuracy over time.} shows the average training accuracy on CIFAR-10 of all methods over 6 time steps. Our method FedDAA achieves the highest and most stable accuracy among all methods, closely approaching the Oracle baseline. What is more, FedDAA demonstrates superior robustness to concept drift, showing reduced performance degradation and alleviating catastrophic forgetting relative to other approaches. In particular, the results show that when multiple data conditional distributions coexist in the system, single-model approaches (e.g., FedAvg, A-FedAvg, and Flash) fail to converge and perform significantly worse than multiple-model methods (e.g., FedRC, FedDrift, and FedDAA). Moreover, FedRC struggles in our scenario, where the number of clusters changes over time, as it cannot dynamically adjust the number of clusters. Although FedDrift adapts relatively well to new data, we observed that it tends to use a larger number of clusters and suffers from catastrophic forgetting when concept drift occurs. These results demonstrate the effectiveness of FedDAA in handling multi-source concept drift and mitigating catastrophic forgetting.

\paragraph{Comparison with baselines on forgetting mitigation.} 

Fig.~\ref{fig:12-subfigs} presents the average test accuracy of all methods on 12 different tasks from CIFAR-10 across 6 time steps. Here, (1, $0^\circ$) denotes (concept 1, 0-degree rotation). "Concepts 1 to 4" refer to different conditional distributions. Across all 12 task configurations, our proposed FedDAA consistently and significantly outperforms all baseline methods.
Notably, FedDAA shows a strong ability to adapt to multi-source concept drift. It maintains stable performance and effectively mitigates catastrophic forgetting, whereas the baseline methods often experience substantial performance degradation when concept drift occurs. For example, in tasks such as $(1, 0^\circ)$, as shown in Fig.~\ref{fig:12subfig-a}, FedDAA sustains a test accuracy of approximately 0.70 to 0.75, while the performance of other methods suffers catastrophic forgetting due to the presence of concept drift.
These results highlight FedDAA’s advanced capability in handling both temporal and spatial data heterogeneity, confirming its effectiveness as a robust FL framework for dynamic and non-stationary environments.


\end{document}